\documentclass{fairmeta}
\setcitestyle{sort}

\usepackage{amsmath,amssymb,mathtools,amsthm}

\usepackage{array,makecell,adjustbox}
\usepackage[shortlabels,inline]{enumitem}
\usepackage{xspace}

\titlespacing*{\paragraph}{0pt}{1.25ex plus 1ex minus .2ex}{0.75em}
\usepackage{hyperref}       
\usepackage{url}            
\usepackage{booktabs}       
\usepackage{amsfonts}       
\usepackage{nicefrac}       
\usepackage{microtype}      
\usepackage{xcolor}         

\newtheorem{replayanalysisprop}{Proposition}[section]
\usepackage{graphicx}
\usepackage{caption}
\usepackage{subcaption}
\usepackage{array,tabularx} 
\usepackage{multirow}         
\usepackage{threeparttable}   
\usepackage{bm}
\usepackage{algorithm}        
\usepackage{algpseudocode}    
\usepackage{tikz}

\definecolor{lightgreen}{RGB}{200, 230, 200}
\definecolor{lightpink}{RGB}{249, 155, 157}
\definecolor{lightorange}{RGB}{248, 208, 151}
\definecolor{lightblue}{RGB}{200, 220, 255}
\definecolor{mediumseagreen}{RGB}{60, 179, 113}
\definecolor{steelblue}{RGB}{197, 229, 251}
\newcommand{\extra}[1]{\colorbox{lightpink}{\strut#1}}
\newcommand{\miss}[1]{\colorbox{lightorange}{\strut#1}}
\newcommand{\wrong}[1]{\colorbox{lightblue}{\strut#1}}
\theoremstyle{plain}

\theoremstyle{definition}

\theoremstyle{remark}

\title{ER-JEPA: Experience Replay Improves Joint-Embedding Predictive Learning in Language Models}

\author[1]{Jingnan Pu}
\author[1]{Zi-En Fan}
\author[1,*]{Feng Lian}

\affiliation[1]{School of Automation Science and Technology, Xi'an Jiaotong University\\
No.~28, West Xianning Road, Xi'an, Shaanxi 710049, China}
\contribution[*]{Corresponding author}

\abstract{
Large language models (LLMs) excel at token-level generation but may learn undesirable abstract semantics and lack comprehensive perception. LLM-JEPA mitigates this by aligning different views of the same underlying knowledge via a joint-embedding predictive architecture (JEPA). However, strong alignment does not necessarily lead to accurate, stable predictions. To address this, we propose ER-JEPA, which adds an episodic replay path to LLM-JEPA. ER-JEPA stores training pairs in a memory. At each step, it stores and retrieves relevant data to provide additional supervision. This enables learning from both the current batch and stored training pairs, providing additional supervision for token prediction and representation alignment. Experiments across multiple datasets (NL-RX, GSM8K, Spider, and NQ-Open) demonstrate that ER-JEPA consistently outperforms LLM-JEPA.

}

\begin{document}
\maketitle

\begin{figure}[htbp]
	\centering
	\includegraphics[width=\linewidth]{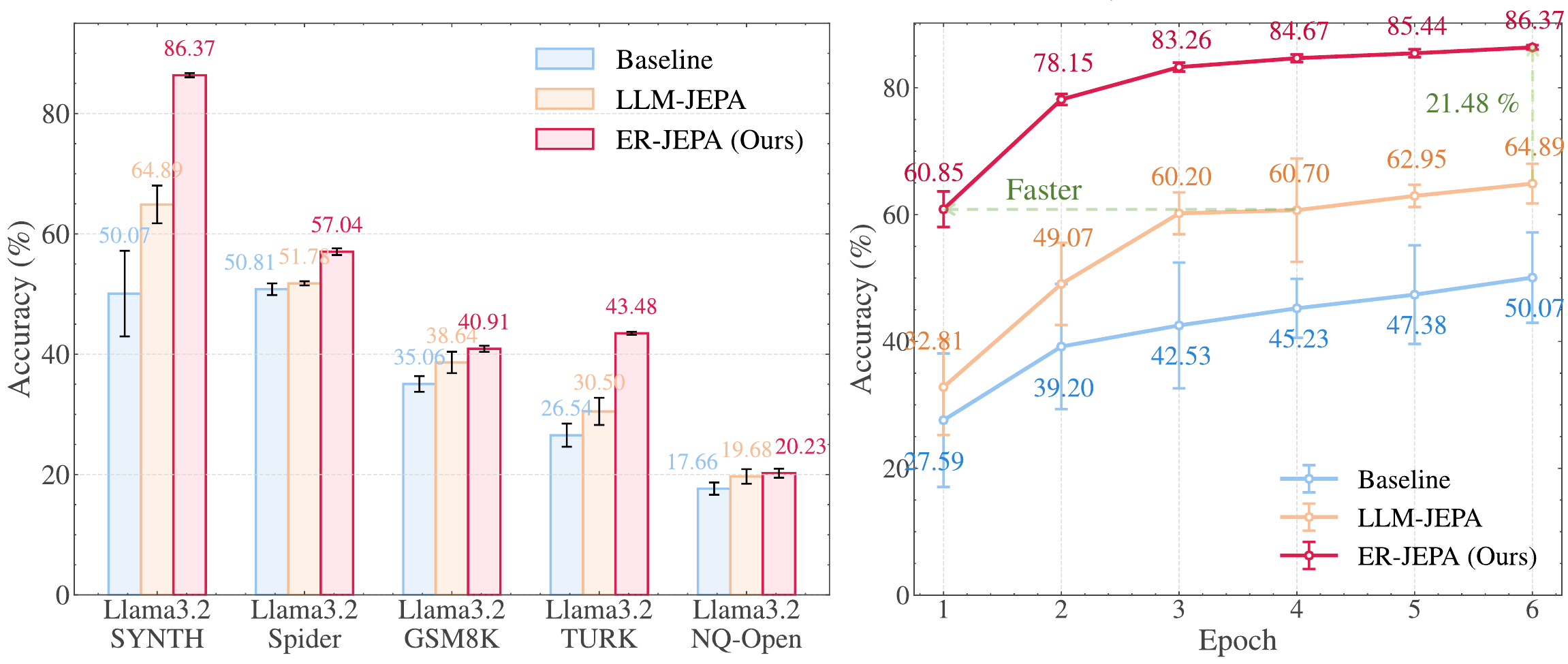}
	\caption{\textbf{ER-JEPA outperforms LLM-JEPA across tasks.}}
	\label{fig1:main_results}
\end{figure}


\section{Introduction}
Large language models (LLMs) have achieved remarkable advances in fluent generation \citep{pillutla2021mauve,yu2023megabyte,pmlr-v267-li25ch,gong2024evaluation,Li2024FromCD,liu2024deepseek}, few-shot learning \citep{brown2020language,wei2021finetuned,wang2024mixture}, and broad downstream transfer \citep{rafailov2023direct,zeng2022glm,Huan2025DoesMR}. These abilities require models to extract shared structure from large-scale data and convert it into dependable predictions on new problems. Autoregressive pre-training addresses both needs through next-token prediction (NTP) (see Figure~\ref{fig:ER_JEPA_Comparison_Redrawn}(a)), which provides a strong learning signal and can lead to semantic representations under the right conditions \citep{jin2024emergent}. However, a token-level objective may also favor local coherence over the representations needed for comprehensive perception and reasoning.

Joint-Embedding Predictive Architectures (JEPAs) shift prediction from the token level to the semantic level. Recent work on JEPAs has shown promising advances in representation learning and provable benefits for perception tasks \citep{assran2023self,bardes2024revisiting,assran2025v,chen2026vljepa}. Inspired by JEPAs, Huang et al. introduced LLM-JEPA \citep{huang2026llmjepa}, which adds an embedding space predictive loss to the standard language modeling loss (see Figure~\ref{fig:ER_JEPA_Comparison_Redrawn}(b)). This loss aligns different views of the same underlying knowledge while preserving generative capability. It predicts both the next token and the target representation, improving the perception capability of LLMs.

Despite these advances, better representation alignment does not by itself guarantee reliable predictions. We observe that LLM-JEPA continues to make errors even after its alignment loss has converged (Figure~\ref{fig:alignment_vs_answers}). Some previously correct predictions also become incorrect later in training. Therefore, progress in representation learning is not automatically converted into predictions that are learned and retained. This leaves an open question: how can LLM-JEPA correct persistent errors and retain correct predictions during training?

We explore experience replay as a way to address this question by revisiting previously seen training examples and providing additional supervision as the model changes. Like standard fine-tuning, LLM-JEPA supervises an example only when it appears in the current mini-batch. Between two visits, many updates change the model, and the example is not checked again until the data schedule reaches it in the next epoch. Replay can reduce this gap by revisiting past examples during training.
The replay mechanism is widely used in continual learning to preserve learned knowledge \citep{shi2024continuallearninglargelanguage,huang2024mitigatingcatastrophicforgettinglarge,deng2025unlockingpowerrehearsalcontinual}, and Complementary Learning Systems (CLS) theory gives it a similar role in the brain \citep{kumaran2016learning}. In CLS, a fast system stores individual experiences and replays them to a slow system that extracts shared structure. From this perspective, LLM-JEPA is similar to the former but has no counterpart to the fast system. This motivates our research question: \textbf{can experience replay improve the predictive performance of LLM-JEPA?}

To answer it, we propose ER-JEPA, which adds an episodic replay (ER) pathway that stores past training examples and replays them during later updates (see Figure~\ref{fig:ER_JEPA_Comparison_Redrawn}(c)). It enables the model to learn from useful past experiences beyond the current mini-batch, providing additional supervision signals and strengthening the perception capabilities of LLMs. The replay path is removed after training, so ER-JEPA has the same architecture and inference cost as LLM-JEPA.

In summary, the contributions of this paper are as follows:

\begin{itemize}
	\item We propose ER-JEPA, which extends LLM-JEPA with an episodic replay path that revisits past training examples to provide additional supervision for token prediction and representation alignment. We show that learning from past samples continues to provide alignment updates and reinforces the knowledge important for perception and reasoning.

	\item  We implement content-based, uniform, and hard selection within a shared replay path. We demonstrate that the replay branch itself, rather than any specific retrieval rule, is the central source of improvement.
	
	\item We evaluate ER-JEPA across multiple datasets. The results show that ER-JEPA consistently outperforms LLM-JEPA, demonstrating the benefit of episodic replay for abstraction and generalization.
	
\end{itemize}

\begin{figure*}[htbp]
	\centering
	\includegraphics[width=\linewidth]{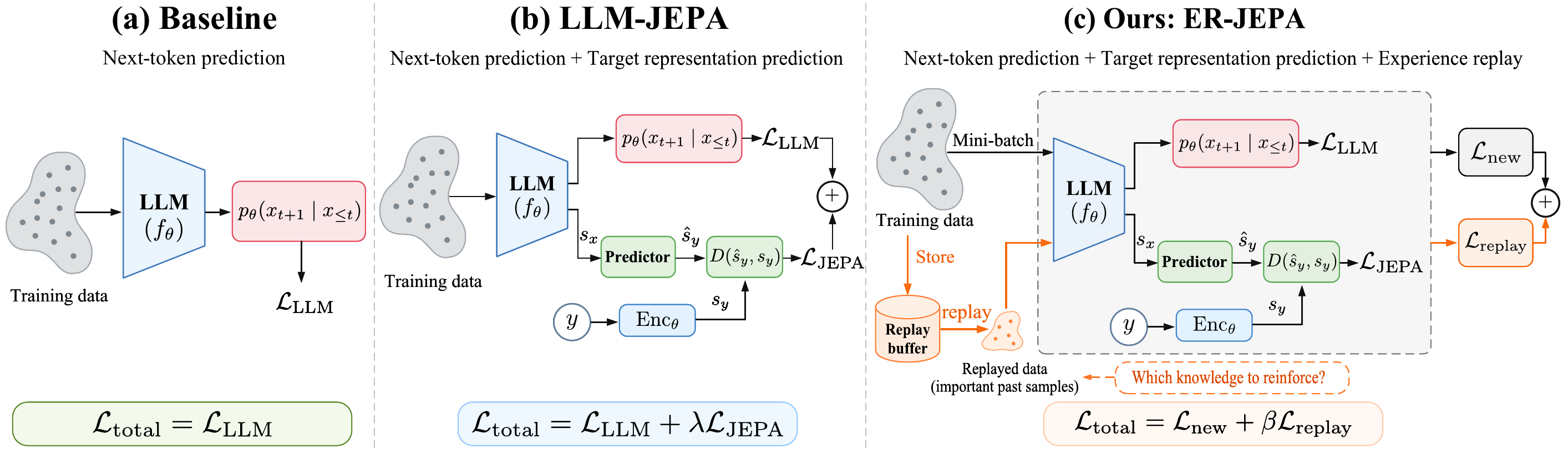}
	\caption{\textbf{Comparison of baseline training, LLM-JEPA, and ER-JEPA.} (a) The baseline learns through next token prediction. (b) LLM-JEPA adds prediction in the embedding space to learn abstract relations between inputs and targets. (c) ER-JEPA adds a replay path to LLM-JEPA. It stores past training examples and replays their tokens, providing additional supervision for both token prediction and representation alignment. Past experience thus continues to shape learning as the model changes, helping it correct errors and reinforce learned knowledge for more reliable predictions.}
	\label{fig:ER_JEPA_Comparison_Redrawn}
\end{figure*}

\section{Methodology}
\label{sec:method:replay_first}

ER-JEPA extends LLM-JEPA with a replay path over stored training examples. We first introduce the joint training objective in Section~\ref{sec:method:replay_first:objective}. We then explain how replay provides additional supervision in Section~\ref{sec:method:replay_first:training}.

\subsection{ER-JEPA Overview and Objective}
\label{sec:method:replay_first:objective}

Our ER-JEPA builds on LLM-JEPA, retaining its token prediction and representation alignment objectives while adding an episodic replay path to learn from past training examples. The original LLM-JEPA objective is
\begin{equation}
	\mathcal{L}_{\rm LLM\text{-}JEPA}
	=
	\underbrace{
		\sum_{\ell=2}^{L}
		\mathcal{L}_{\rm LLM}\!\left(\mathrm{Text}_{1:\ell-1},\mathrm{Text}_{\ell}\right)
	}_{\text{generative capabilities (LLM)}}
	+
	\lambda\,
	\underbrace{
		d\!\left(
		{\rm Pred}_{\phi}({\rm Enc}_{\theta}(\mathrm{Text})),\;
		{\rm Enc}_{\theta}(\mathrm{Code})
		\right)
	}_{\text{abstraction capabilities (JEPA)}}.
	\label{eq:replay_first:base_objective}
\end{equation}
Here $\mathrm{Text}$ and $\mathrm{Code}$ denote the source and target views, and $L$ is the source sequence length. $\mathrm{Enc}_{\theta}$ and $\mathrm{Pred}_{\phi}$ denote the encoder and predictor. The function $d$ measures cosine distance, and $\lambda$ controls the JEPA loss weight.

For a fresh mini-batch $\mathcal{B}_t$ of size $B$, let $x_i^u$ and $x_i^a$ denote the source and target tokens of example $i$, and $x_i^{\rm full}=x_i^u\Vert x_i^a$. Define
\begin{equation}
	z_i^u = \mathrm{Enc}_\theta(x_i^u),
	\qquad
	p_i = \mathrm{Pred}_\phi(z_i^u),
	\qquad
	z_i^a = \mathrm{Enc}_\theta(x_i^a).
	\label{eq:replay_first:new_representations}
\end{equation}
Let $\mathcal{T}_i^{\rm new}$ contain the target-token positions in $x_i^{\rm full}$. The loss on new data is
\begin{equation}
	\mathcal{L}_{\rm LLM\text{-}JEPA\text{-}new}(\mathcal{B}_t)
	=
	\frac{\gamma}{Z_t^{\rm new}}
	\sum_{i=1}^{B}\sum_{\ell\in\mathcal{T}_i^{\rm new}}
	\mathcal{L}_{\rm LLM}\!\left(x_{i,<\ell}^{\rm full},x_{i,\ell}^{\rm full}\right)
	+\frac{\lambda}{B}\sum_{i=1}^{B}d(p_i,z_i^a),
	\label{eq:replay_first:new_loss}
\end{equation}
where $Z_t^{\rm new}=\sum_{i=1}^{B}|\mathcal{T}_i^{\rm new}|$ and $\gamma=1$ is the language-modeling loss weight.

The full ER-JEPA objective adds a replay term,
\begin{equation}
	\mathcal{L}_{\rm ER\text{-}JEPA}(\mathcal{B}_t)
	=
	\mathcal{L}_{\rm LLM\text{-}JEPA\text{-}new}(\mathcal{B}_t)
	+
	\beta\,\mathcal{L}_{\rm LLM\text{-}JEPA\text{-}replay}(\mathcal{B}_t),
	\label{eq:replay_first:total}
\end{equation}
where $\beta\geq 0$ controls the strength of replay. $\mathcal{L}_{\rm LLM\text{-}JEPA\text{-}replay}$ is defined in Section~\ref{sec:method:replay_first:training}.

\subsection{Replay-Based Training}
\label{sec:method:replay_first:training}

Content-based, uniform, and hard replay are three implementations of the same replay path. They share the replay objective below. The selection rules for all three strategies are given in Appendix~\ref{sec:appendix:replay_policies}. Unless otherwise specified, ER-JEPA uses content-based replay.

At each step, a replay strategy selects up to $R$ entries from the episodic memory, where $R$ is the replay budget. Let $R_t$ be the number of selected entries and $(\widetilde{\tau}_i^u,\widetilde{\tau}_i^a)$ be the stored source and target tokens for entry $i$. Memory storage and retrieval are described in Appendix~\ref{sec:method:replay_first:memory}.

For each replay entry $i$, we concatenate the stored source and target tokens as $\widetilde{\tau}_i^{\rm full}=\widetilde{\tau}_i^u\Vert\widetilde{\tau}_i^a$. The current model recomputes the source and target representations,
\begin{equation}
	z_i^{u,{\rm rep}}=\mathrm{Enc}_{\theta}(\widetilde{\tau}_i^u),
	\qquad
	z_i^{a,{\rm rep}}=\mathrm{Enc}_{\theta}(\widetilde{\tau}_i^a),
	\qquad
	p_i^{\rm rep}=\mathrm{Pred}_{\phi}(z_i^{u,{\rm rep}}).
	\label{eq:replay_first:recomputed_representations}
\end{equation}

Let $\mathcal{T}_i^{\rm rep}$ contain the target-token positions in $\widetilde{\tau}_i^{\rm full}$. For $R_t>0$, the replay loss is
\begin{equation}
	\mathcal{L}_{\rm LLM\text{-}JEPA\text{-}replay}(\mathcal{B}_t)
	=
	\frac{\gamma}{Z_t^{\rm rep}}
	\sum_{i=1}^{R_t}\sum_{\ell\in\mathcal{T}_i^{\rm rep}}
	\mathcal{L}_{\rm LLM}\!\left(\widetilde{\tau}_{i,<\ell}^{\rm full},\widetilde{\tau}_{i,\ell}^{\rm full}\right)
	+
	\frac{\lambda}{R_t}
	\sum_{i=1}^{R_t}d\!\left(p_i^{\rm rep},z_i^{a,{\rm rep}}\right),
	\label{eq:replay_first:token_replay_objective}
\end{equation}

where $Z_t^{\rm rep}=\sum_{i=1}^{R_t}|\mathcal{T}_i^{\rm rep}|$. The loss is zero when $R_t=0$. The normalization matches the new-data loss in Eq.~\eqref{eq:replay_first:new_loss}.

The current and replay losses jointly update the shared model through Eq.~\eqref{eq:replay_first:total}. Replay-time JEPA errors refresh the selected memory scores, and replay counts are incremented as specified in Appendix~\ref{sec:appendix:replay_memory}. At inference time, the episodic pathway is removed.




\begin{figure}[htbp]
	\centering
	\includegraphics[width=\linewidth]{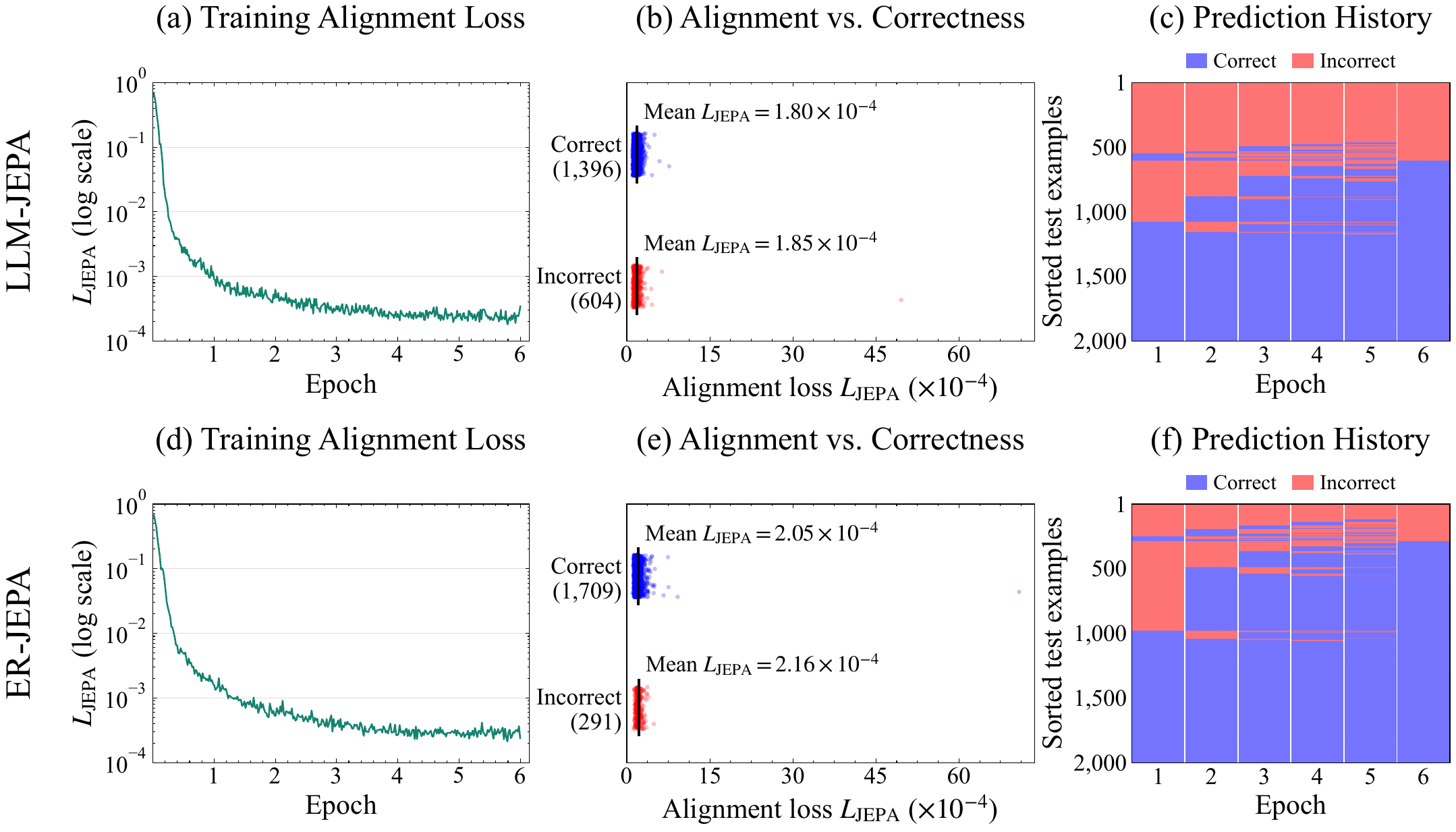}
	\caption{\textbf{LLM-JEPA can achieve strong alignment yet still make errors and lose previously correct predictions.} For LLM-JEPA, the JEPA loss converges in (a). Yet in (b) and (c), errors remain even when cosine similarities between inputs and targets are close to one. Some correct predictions also become incorrect as training continues. For ER-JEPA, the JEPA loss also converges in (d). In (e) and (f), more predictions are correct despite slightly lower cosine similarities. At each checkpoint, ER-JEPA consistently has more correct answers than LLM-JEPA. These results suggest that replay supervision helps resolve errors that alignment alone leaves unresolved. This figure uses Llama-3.2-1B on 2000 SYNTH test examples with seed 82. Heatmap rows are sorted independently within each method by final-checkpoint correctness and prediction history.}
	\label{fig:alignment_vs_answers}
\end{figure}
\section{Experiments: Episodic Replay Improves LLM-JEPA}
\label{sec:experiments}
This section evaluates whether episodic replay improves JEPA-based LLM training. After describing the experimental setup in Section~\ref{sec:experiments:setup}, we report results across tasks and compare accuracy at matched training compute in Section~\ref{sec:experiments:main}. We then examine the role of historical replay in Section~\ref{sec:experiments:compute_and_mechanism}, analyze prediction quality across regex structures and lengths  and report resource costs in Section~\ref{sec:experiments:prediction_quality}. Further ablations are provided in Appendix~\ref{app:ablation}.

\subsection{Experimental Setup}
\label{sec:experiments:setup}

We evaluate ER-JEPA under the same task families used in the LLM-JEPA protocol. Experiments are conducted on Llama-3.2-1B-Instruct~\citep{llama3}. We use five datasets with naturally paired input--target structures: NL-RX-SYNTH and NL-RX-TURK~\citep{locascio2016neural} for generating regular expressions from descriptions in natural language, GSM8K~\citep{cobbe2021training} for mathematical reasoning, Spider~\citep{yu2018spider} for text-to-SQL generation, and NQ-Open~\citep{lee-etal-2019-latent-no-open} for open-domain question answering. Following the evaluation protocol of LLM-JEPA, we report exact match accuracy of the generated regular expression for NL-RX-SYNTH and NL-RX-TURK, exact match accuracy of the final answer for GSM8K and NQ-Open, and execution accuracy for Spider. Detailed experimental settings are provided in Appendix~\ref{sec:appendix:experimental_setup}.

\subsection{ER-JEPA Outperforms LLM-JEPA}
\label{sec:experiments:main}

\paragraph{Episodic replay improves LLM-JEPA across tasks.}
\label{subsec:main_results}

We first test whether replay provides benefits beyond representation alignment alone. We fine-tune Llama-3.2-1B-Instruct with different training objectives on all five datasets described above. As shown in Figure~\ref{fig1:main_results}~(left), ER-JEPA achieves higher performance than LLM-JEPA on all five datasets. 

The consistent gains show that the benefit of ER-JEPA is not limited to one type of task. The method improves both structured generation tasks, such as regular expression and SQL generation, and more general generation tasks, such as mathematical reasoning and open-domain question answering. These results suggest that episodic replay improves the use of training data in LLM-JEPA. By storing separated traces of past examples and retrieving relevant traces for the current context, ER-JEPA allows the model to reuse useful past experiences during optimization. This mechanism provides additional training signals beyond the current mini-batch and leads to more effective fine-tuning.

\paragraph{ER-JEPA achieves higher accuracy at the same training compute.}
To test whether the gains persist at the same training compute, we evaluate all methods at the same budgets from $40.05$ to $240.31$ PFLOPs on SYNTH with Llama-3.2-1B. We also evaluate three replay policies, including content, uniform, and hard replay, to test whether the benefit depends on the replay selection rule. The three policies share the same replay path and differ only in how stored examples are selected.

Figure~\ref{fig:accuracy_vs_compute}(a) shows that all three replay policies outperform LLM-JEPA at every evaluated compute budget. At $240.31$ PFLOPs, content replay reaches $83.65 \pm 1.48\%$, while LLM-JEPA reaches $68.75 \pm 4.92\%$, a gain of $14.90$ percentage points. The normalized AUC also increases from $56.61 \pm 3.41\%$ to $73.73 \pm 1.15\%$. The gains across all three rules show that the benefit is not limited to content-based selection. At matched training compute, ER-JEPA achieves higher mean accuracy than LLM-JEPA on SYNTH with all three replay policies.
\begin{figure*}[htbp]
	\centering
	\begin{subfigure}[b]{0.48\linewidth}
		\centering
		\includegraphics[width=\linewidth]{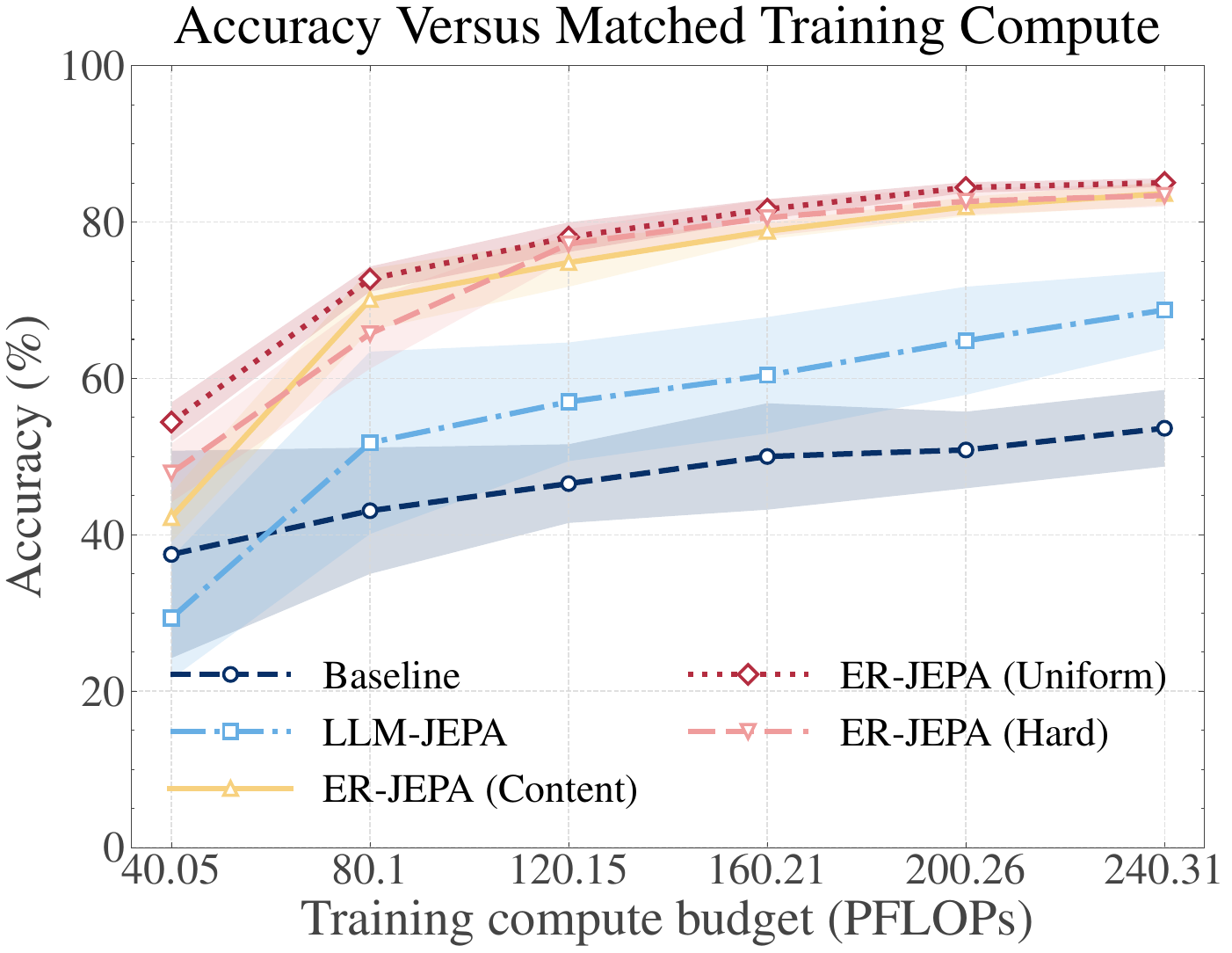}
		\caption{Five seeds.}
		\label{fig:accuracy_vs_compute_five_seeds}
	\end{subfigure}\hfill
	\begin{subfigure}[b]{0.48\linewidth}
		\centering
		\includegraphics[width=\linewidth]{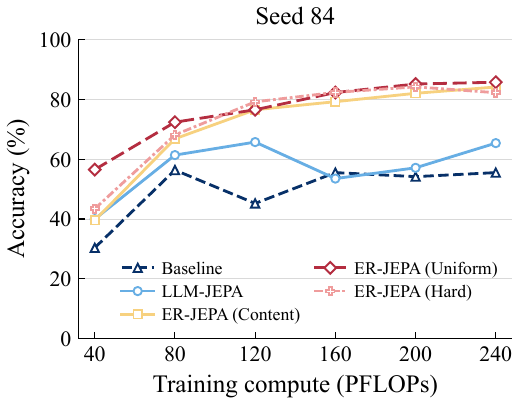}
		\caption{Seed $=84$.}
		\label{fig:accuracy_vs_compute_seed84}
	\end{subfigure}
	\caption{\textbf{ER-JEPA improves accuracy at matched training compute and reduces overfitting.} (a) All three replay policies outperform LLM-JEPA at every evaluated budget. These gains support the benefit of introducing the replay path. (b) Baseline loses accuracy after an early increase. LLM-JEPA delays this decline but still loses accuracy later. Content and uniform replay maintain their gains, while hard replay shows only a small final decline. These trends suggest that replay mitigates overfitting. They complement the mean curves in Figure~\ref{fig1:main_results} (right). Results use Llama-3.2-1B on SYNTH at six compute budgets from $40.05$ to $240.31$ PFLOPs.}
	\label{fig:accuracy_vs_compute}
\end{figure*}

\paragraph{Replay reduces accuracy drops during training.}
The mean curves in Figure~\ref{fig:accuracy_vs_compute}(a) rise at every budget for all methods, but individual runs behave differently.
Baseline and LLM-JEPA often lose accuracy that they reached earlier in training. These drops happen at different budgets in different seeds, so averaging hides them. Figure~\ref{fig:accuracy_vs_compute}(b) illustrates this pattern for seed 84. Baseline loses accuracy after an initial increase, and LLM-JEPA shows a decline at a later checkpoint. ER-JEPA shows strong resistance to overfitting under all three replay policies. These results suggest that the replay path mitigates overfitting and helps retain the accuracy gained earlier in training. Appendix Figure~\ref{fig:appendix_single_seed_curves} shows the other four seeds.

\subsection{Understanding the Role of Historical Replay}
\label{sec:experiments:compute_and_mechanism}

To analyze these gains, we first compare ER-JEPA with token-matched LLM-JEPA to assess whether additional token exposure alone can reproduce the improvement. We then compare historical replay with a control that applies the same replay loss to current-batch examples, assessing the additional benefit of revisiting past samples. We also examine which errors replay corrects and whether it keeps correct predictions. Appendix~\ref{app:replay_supervision_analysis} analyzes a single replay update.

\paragraph{Content replay achieves higher mean accuracy than token-matched LLM-JEPA.}
We first test whether extra token exposure can explain the gain. The token-matched control uses the token budget of content replay and retains the LLM-JEPA objective. Figure~\ref{fig:mechanism_token_exposure} shows that content replay reaches $83.65 \pm 1.48\%$, compared with $62.25 \pm 20.80\%$ for token-matched LLM-JEPA. These results indicate that extra token exposure alone does not reproduce the gain.

\paragraph{Historical replay achieves higher mean accuracy than the current-batch control.}
We next examine whether historical samples provide value beyond an additional loss on the current batch. The current-batch control applies the replay loss to current-batch examples instead of stored examples. Figure~\ref{fig:mechanism_historical_replay} shows that all three historical replay policies have higher mean accuracy than this control. Uniform replay reaches $85.04 \pm 0.54\%$, which is $3.80$ percentage points (pp) above the control, and it is higher in all five seeds. Content replay reaches $83.65\%$ ($+2.41$~pp), and hard replay reaches $83.41\%$ ($+2.17$~pp). These results are consistent with a benefit from reusing historical samples. 

\begin{figure*}[t]
	\centering
	\begin{minipage}[t]{0.48\textwidth}
		\vspace{0pt}
		\centering
		\includegraphics[width=\linewidth]{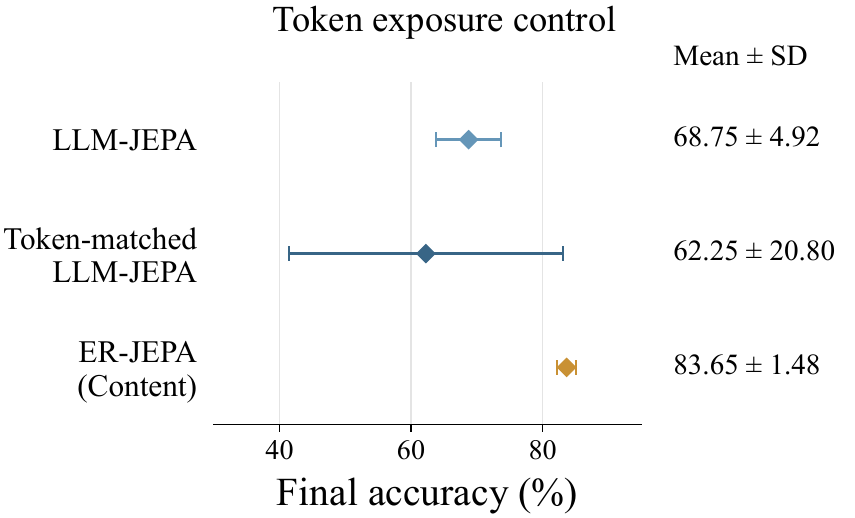}
		\caption{\textbf{ER-JEPA achieves higher mean accuracy with replay under matched token exposure.} Results use Llama-3.2-1B on SYNTH over five seeds.}
		\label{fig:mechanism_token_exposure}
	\end{minipage}\hfill
	\begin{minipage}[t]{0.48\textwidth}
		\vspace{0pt}
		\centering
		\includegraphics[width=\linewidth]{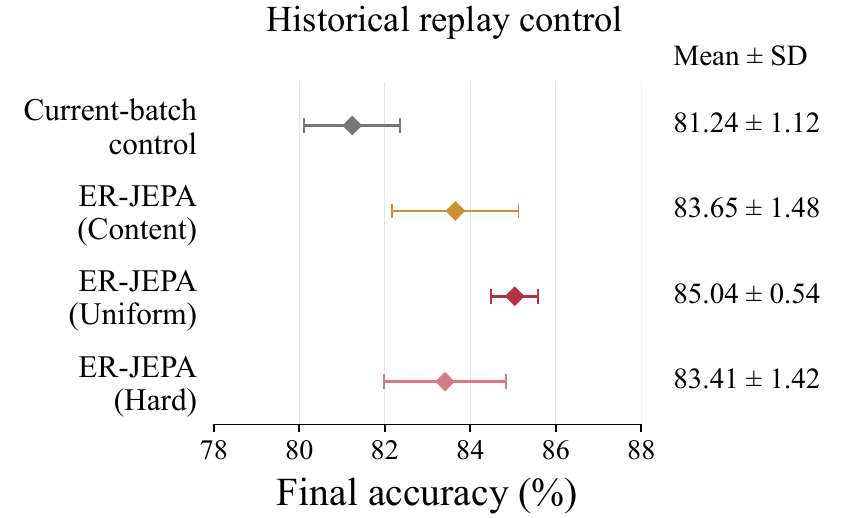}
		\caption{\textbf{Historical samples provide additional value.} Results use Llama-3.2-1B on SYNTH over five seeds.}
		\label{fig:mechanism_historical_replay}
	\end{minipage}
\end{figure*}

\paragraph{ER-JEPA achieves higher correction rates on examples mispredicted by the baseline.}
As discussed in the Introduction, a model can align well with its target representation and still predict the wrong answer, sometimes even losing ground it had already covered. This raises a natural question: \textbf{when Baseline errs, can LLM-JEPA and ER-JEPA fix its mistakes? }To find out, we test both models on the SYNTH inputs where Baseline fails at the final matched compute point, checking whether each model recovers the exact target expression. We sort these failures by length relative to the target, including over-generation (extra regex tokens), under-generation (missing tokens), and same-length mismatch (correct length, wrong expression). Table~\ref{tab:baseline_error_correction} shows that ER-JEPA achieves a higher mean correction rate in all three categories. The largest gain occurs for over-generation, where the rate rises from $53.07\%$ with LLM-JEPA to $84.51\%$ with ER-JEPA. Complete correction counts and rates for each seed are reported in Appendix Table~\ref{tab:baseline_error_correction_by_seed}. Appendix Figure~\ref{fig:llm_jepa_error_outcomes} further compares the two methods on inputs that LLM-JEPA predicts incorrectly. 

To further examine whether replay helps correct errors and retain correct predictions during training, we track predictions on the same test inputs across successive checkpoints. Figure~\ref{fig:alignment_vs_answers} shows more correct predictions for ER-JEPA in (f) than for LLM-JEPA in (c) at every evaluated checkpoint. Figure~\ref{fig:prediction_dynamics}(b) shows a higher mean correction rate for ER-JEPA than for LLM-JEPA. In Figure~\ref{fig:prediction_dynamics}(c), the mean rate of correct predictions becoming incorrect decreases from $8.85\%$ with LLM-JEPA to $6.65\%$ with ER-JEPA. These results suggest that replay supports both error correction and the retention of correct predictions. Example inputs, targets, and predictions from the three methods are shown in Appendix Figure~\ref{fig:appendix_rule_selected_examples}.

\begin{table}[t]
	\centering
	\caption{\textbf{ER-JEPA achieves higher mean correction rates across all three Baseline error categories.} Results use Llama-3.2-1B on SYNTH at 240.309 PFLOPs. Values are means $\pm$ one sample standard deviation over five seeds. Correction rates are computed separately for each seed before averaging.}
	\label{tab:baseline_error_correction}
	\setlength{\tabcolsep}{3pt}
	\renewcommand{\arraystretch}{1.15}
	\begin{tabular}{@{}lclcc@{}}
        \toprule
        Baseline error category & Baseline errors & Method & Corrected & Correction rate (\%) \\
        \midrule
        \multirow{2}{*}{Over-generation} & \multirow{2}{*}{$718.4 \pm 112.5$}
        & LLM-JEPA & $378.8 \pm 64.6$ & $53.07 \pm 7.92$ \\
        & & ER-JEPA & $610.8 \pm 126.1$ & $84.51 \pm 4.70$ \\
        \multirow{2}{*}{Under-generation} & \multirow{2}{*}{$2.6 \pm 1.5$}
        & LLM-JEPA & $0.6 \pm 0.5$ & $24.00 \pm 25.10$ \\
        & & ER-JEPA & $1.2 \pm 0.8$ & $54.67 \pm 44.07$ \\
        \multirow{2}{*}{\shortstack[l]{Same-length\\mismatch}} & \multirow{2}{*}{$168.4 \pm 10.9$}
        & LLM-JEPA & $34.8 \pm 5.8$ & $20.58 \pm 2.35$ \\
        & & ER-JEPA & $42.4 \pm 7.0$ & $25.07 \pm 2.62$ \\
        \bottomrule
    \end{tabular}
\end{table}

\begin{figure}[htbp]
	\centering
	\includegraphics[width=\linewidth]{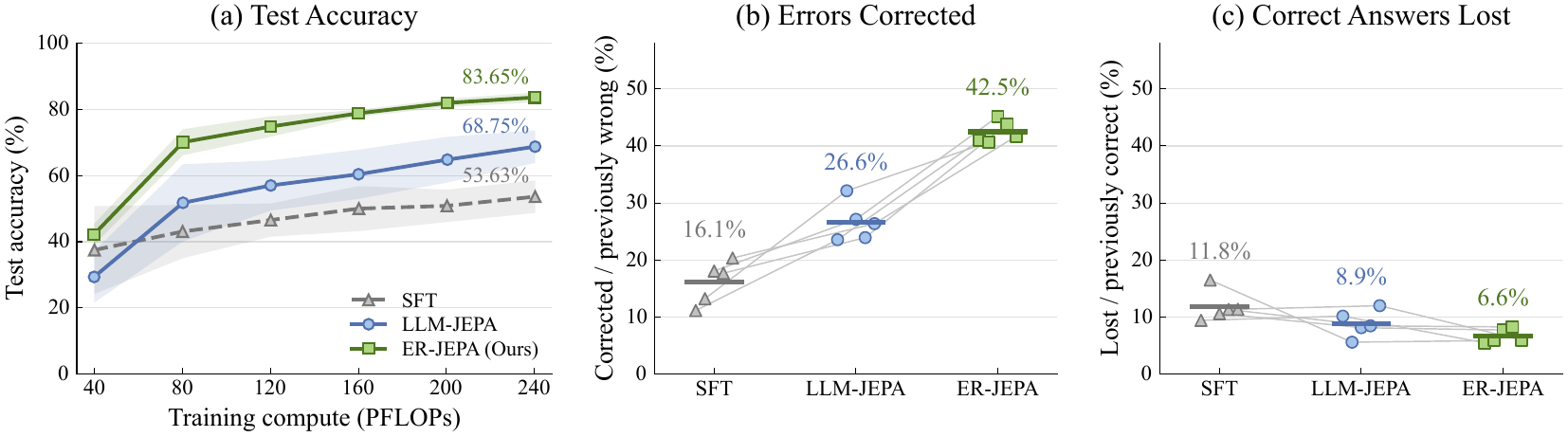}
	\caption{\textbf{ER-JEPA achieves higher accuracy and corrects more errors during training.} (a) Test accuracy at comparable training compute for SFT, LLM-JEPA, and ER-JEPA. (b) The fraction of wrong predictions that become correct at the next checkpoint. (c) The fraction of correct predictions that become wrong at the next checkpoint. Compared with LLM-JEPA, ER-JEPA improves error correction across all five seeds, while the reduction in lost correct predictions is smaller and varies across seeds. Results use Llama-3.2-1B on the same 2,000 SYNTH test examples.}
	\label{fig:prediction_dynamics}
\end{figure}
\subsection{Fine-grained Analysis of Prediction Quality}
\label{sec:experiments:prediction_quality}

We next examine where the accuracy gains occur within the SYNTH test set, focusing on the structure and length of the target regular expressions.

\paragraph{Accuracy gains span multiple forms of symbolic composition.}
To examine how these gains relate to symbolic structure, we analyze the SYNTH test set using six structural subsets defined by the target regular expressions, including complement, intersection, alternation, quantifiers, boundaries and anchors, and character classes. Detailed definitions and classification rules are provided in Appendix Table~\ref{tab:regex_structure_definitions}. Figure~\ref{fig:regex_structure_accuracy} shows that ER-JEPA achieves higher mean exact match accuracy than LLM-JEPA in all six subsets. The largest gains occur for expressions containing quantifiers, intersections, and character classes, with improvements of $16.90$, $16.07$, and $15.42$ percentage points, respectively. These structures encode repetition, conjunction, and restrictions on allowed characters, all of which must be preserved when translating a natural language description into a regular expression. The gains therefore span several forms of constraint composition. This pattern is consistent with replay reinforcing the mapping from linguistic constraints to their symbolic implementation.

\paragraph{Replay reduces sensitivity to expression length.}
We next examine whether this advantage persists for longer expressions. Target regex length serves as a proxy for complexity. We divide the SYNTH test set into four groups: $\leq 10$, $11$--$15$, $16$--$20$, and $>20$ regex tokens. Figure~\ref{fig:regex_length_accuracy} shows that Regular is highly sensitive to expression length. Its mean accuracy drops from $83.64\%$ in the shortest group to $33.53\%$ in the longest group, a decrease of $50.11$ percentage points. LLM-JEPA reduces this gap, but its accuracy still falls from $85.15\%$ to $56.76\%$. ER-JEPA achieves the highest accuracy in all four groups. Its accuracy decreases from $89.80\%$ to $78.53\%$, a drop of $11.27$ percentage points, compared with $28.39$ for LLM-JEPA. These results suggest that episodic replay helps the model generate longer symbolic expressions more reliably.

\begin{figure*}[htbp]
	\centering
	\begin{minipage}[t]{0.48\textwidth}
		\vspace{0pt}
		\centering
		\parbox[c][0.60\linewidth][c]{\linewidth}{%
			\centering
			\includegraphics[width=\linewidth]{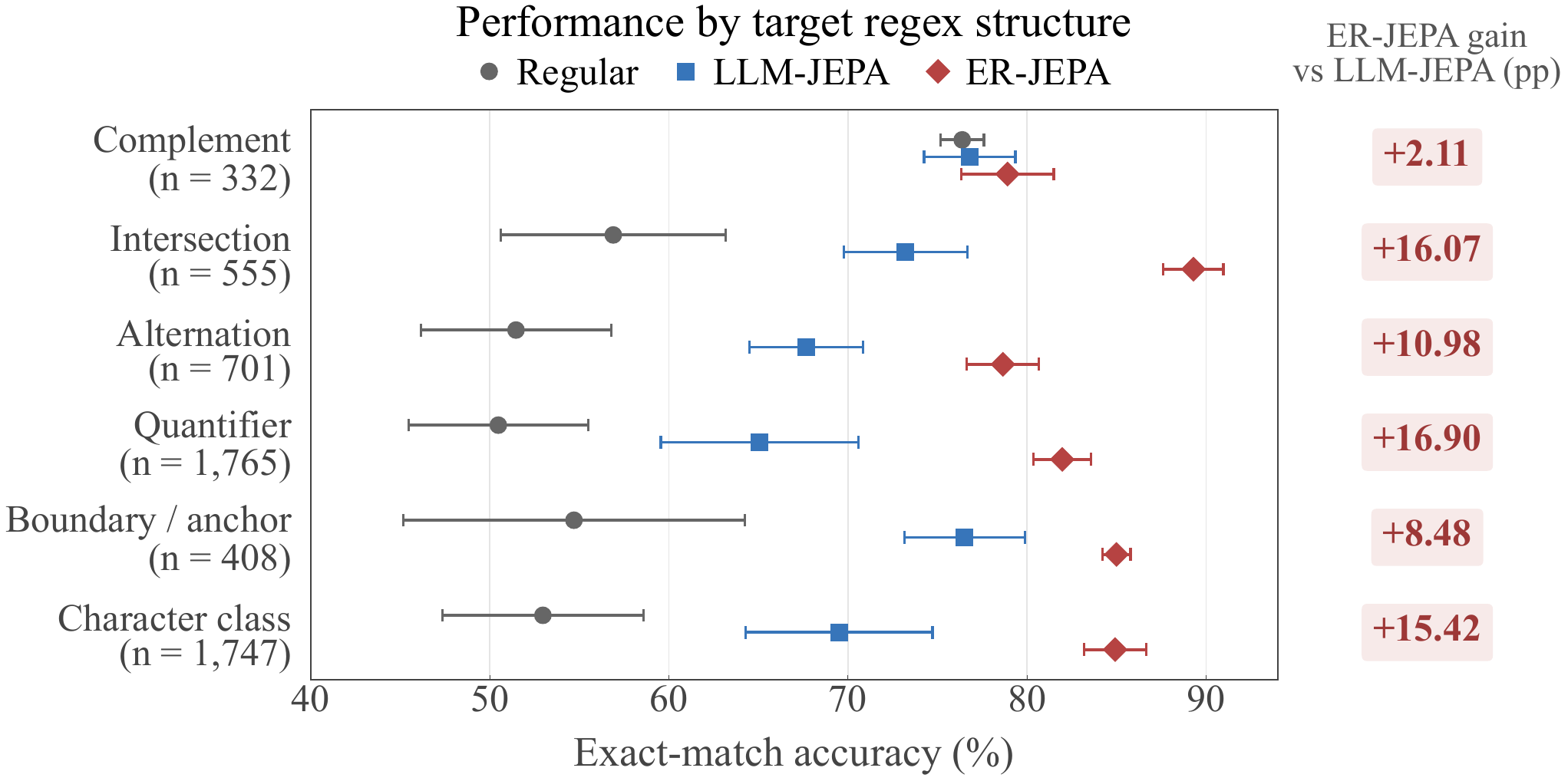}%
		}
		\caption{\textbf{ER-JEPA achieves higher prediction accuracy across all six evaluated regex structures with episodic replay.} Compared with LLM-JEPA, ER-JEPA improves accuracy by $16.90$, $16.07$, and $15.42$ percentage points for expressions containing quantifiers, intersections, and character classes, respectively.}
		\label{fig:regex_structure_accuracy}
	\end{minipage}\hfill
	\begin{minipage}[t]{0.48\textwidth}
		\vspace{0pt}
		\centering
		\parbox[c][0.60\linewidth][c]{\linewidth}{%
			\centering
			\includegraphics[width=\linewidth]{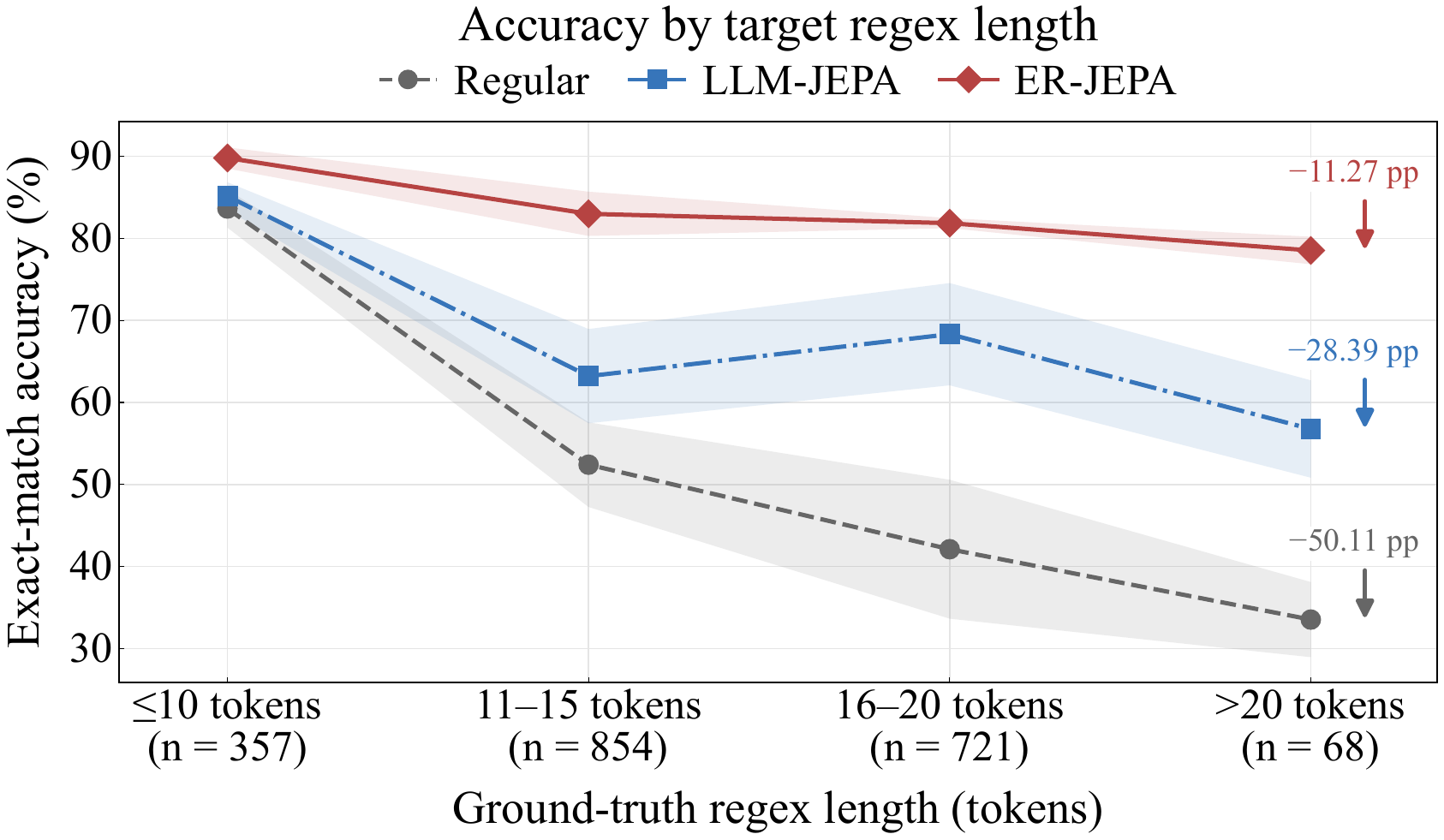}%
		}
		\caption{\textbf{ER-JEPA maintains higher accuracy on longer regular expressions with episodic replay.} ER-JEPA leads in all four length groups and shows a smaller decline from the shortest to the longest group than Regular and LLM-JEPA. Lines show mean exact match accuracy over five seeds, and shaded bands indicate one standard deviation.}
		\label{fig:regex_length_accuracy}
	\end{minipage}
\end{figure*}

\paragraph{Training gains come without additional inference cost.} At inference, ER-JEPA uses the same generation procedure as LLM-JEPA. We remove the episodic memory and replay path after training. Following LLM-JEPA, we use only the fine-tuned language model for inference. Replay therefore adds no overhead to inference time and requires no additional computation or memory during inference.

Ablation studies on memory addressing, memory capacity and JEPA objective hyperparameters are provided in Appendix~\ref{app:ablation}.

\section{Related Work}

Joint-Embedding Predictive Architectures (JEPAs) learn representations by predicting target embeddings from context. \citet{lecun2022path} proposed JEPA as a framework for learning world models that predict abstract states without reconstructing every input detail. A related approach, data2vec~\citep{baevski2022data2vec}, predicts contextualized representations from masked inputs using the same learning method across speech, vision, and language.

In vision, I-JEPA~\citep{assran2023self} predicts the representations of target image regions from a context region, learning semantic features without relying on hand-crafted augmentations. V-JEPA~\citep{bardes2024revisiting} extends latent prediction to video and learns representations that support downstream tasks with a frozen encoder. Image World Models~\citep{garrido2024learning} broaden the prediction task beyond masked regions to include the effects of photometric transformations. To better understand latent prediction, \citet{littwin2024jepa} analyze its learning dynamics in deep linear models. Their analysis identifies an implicit bias toward features with high regression coefficients.

Recent work also applies JEPA to language and vision-language learning. VL-JEPA~\citep{chen2026vljepa} predicts continuous text embeddings for vision-language tasks and uses a decoder when text output is needed. For language models, LLM-JEPA~\citep{huang2026llmjepa} combines next-token prediction with a JEPA loss over paired textual views. This joint objective encourages semantic structure in the learned representations while preserving generative capabilities. Our work builds on LLM-JEPA by adding episodic replay, which reuses past training examples to provide additional supervision for token prediction and representation alignment.

\section{Conclusion}
We propose ER-JEPA, which adds an episodic replay path to LLM-JEPA. The replay path revisits stored training pairs with the current model and provides additional learning signals. At matched training compute, ER-JEPA outperforms LLM-JEPA with all three selection rules. Replay also corrects more errors during training and avoids most large accuracy drops. These results show that revisiting past examples is a simple and effective way to improve LLM-JEPA.

We view the current implementation of ER-JEPA as a first step toward introducing episodic replay into the JEPA framework, and future work could further refine and extend this direction. In particular, since the overall training objective is formulated as a linear combination of the new data JEPA loss and the replay loss, the relative weighting between different loss terms, such as the replay weight $\beta$, currently needs to be selected through grid search. This tuning process introduces additional computational cost and may motivate future work on adaptive weighting strategies. Moreover, ER-JEPA inherits the need of LLM-JEPA for datasets with paired views, such as text and code. We hope future work will explore more efficient and general data augmentation strategies for constructing informative paired views, thereby broadening the application regime of ER-JEPA.


\bibliography{iclr2027_conference}
\bibliographystyle{arxiv-numbered}


\appendix

\suppressfloats[t]
\section{Detailed Experimental Setup}
\label{sec:appendix:experimental_setup}

The experimental design follows the LLM-JEPA protocol. This design allows us to isolate the effect of the proposed episodic replay pathway, rather than introducing confounding changes in data, model scale, or training configuration. All experiments are conducted under the same LoRA fine-tuning setting with rank 512. Under this setting, we compare three fine-tuning objectives: the standard supervised fine-tuning objective, the LLM-JEPA objective, and the proposed ER-JEPA objective.

For each combination of model and dataset, we use a learning rate of $1\mathrm{e}{-5}$ and train for $6$ epochs. We adopt this LoRA configuration because the LLM-JEPA study found that it outperforms fine-tuning all model parameters under the same setting. The standard supervised objective, LLM-JEPA, and ER-JEPA all share this same configuration, learning rate, and training schedule. This helps isolate the effects of the training objective and the episodic-replay extension on performance.

All experiments are repeated with five fixed random seeds,
$\{82, 23, 37, 84, 4\}$.
We report the mean performance across seeds. Statistical significance is assessed using paired one-tailed $t$-tests over seed-matched runs. All training runs are conducted on a server equipped with four NVIDIA A800 GPUs.

\section{How Historical Replay Can Provide Additional Correction Directions}
\label{app:replay_supervision_analysis}

LLM-JEPA combines token generation with representation alignment, but both losses directly supervise only the current batch. We give a conditional explanation of how replay changes one gradient-descent step. Replay can supply a descent direction for the selected historical loss that current-batch gradients do not provide.

\paragraph{The local limitation of the LLM-JEPA objective.}
Let $\omega$ collect the trainable LoRA coordinates, counting shared parameters once. For a nonempty current batch $\mathcal B$, let $\mathcal T_{\mathcal B}$ contain its supervised-token positions and $Z_{\mathcal B}=|\mathcal T_{\mathcal B}|>0$. Write $\pi_q=\operatorname{softmax}(f_q(\omega))$ for the token probabilities, $y_q$ for the target token, and $e_i=p_i/\|p_i\|_2-z_i^a/\|z_i^a\|_2$ for the alignment residual. Equation~\eqref{eq:replay_first:new_loss} gives the LLM-JEPA objective
\begin{equation}
	L_{\mathcal B}(\omega)
	=\underbrace{\frac{\gamma}{Z_{\mathcal B}}\sum_{q\in\mathcal T_{\mathcal B}}-\log\pi_{q,y_q}}_{\text{token generation}}
	+\underbrace{\frac{\lambda}{2|\mathcal B|}\sum_{i\in\mathcal B}\|e_i\|_2^2}_{\text{representation alignment}},
	\label{eq:replay_analysis:loss}
\end{equation}
where $\gamma,\lambda>0$ are fixed. Fix a parameter state $\omega_0$. Hold the examples, sequences, masks, and forward-pass randomness fixed. Repeated occurrences use identical views and forward realizations. For all current and replay examples, assume finite logits, twice continuously differentiable maps, and representation norms above the normalization floor near $\omega_0$.

At $\omega_0$, let $J_q=\partial f_q/\partial\omega$ and $A_i=\partial e_i/\partial\omega$, differentiating both representation branches. The two losses supply the gradient
\begin{equation}
	g_{\mathcal B}=\nabla L_{\mathcal B}(\omega_0)
	=\frac{\gamma}{Z_{\mathcal B}}\sum_{q\in\mathcal T_{\mathcal B}}J_q^\top(\pi_q-\mathbf e_{y_q})
	+\frac{\lambda}{|\mathcal B|}\sum_{i\in\mathcal B}A_i^\top e_i,
	\label{eq:replay_analysis:gradient}
\end{equation}
where $\mathbf e_{y_q}$ is the target-token basis vector. Neither term provides a component along directions in
\begin{equation}
	\mathcal K=\{v:\ J_qv\in\operatorname{span}\{\mathbf1\}\ \forall q\in\mathcal T_{\mathcal B},
	\quad A_iv=0\ \forall i\in\mathcal B\}.
	\label{eq:replay_analysis:blind_directions}
\end{equation}
These directions preserve current token probabilities and alignment residuals to first order. They may still change losses on historical examples. Let $P$ be the orthogonal projector onto $\mathcal K$. The subspace $\mathcal K$ may contain only the zero vector.

ER-JEPA adds $\beta L_{\mathcal R}$, with fixed $\beta>0$, where $L_{\mathcal R}$ applies the same joint objective to a nonempty replay multiset with supervised tokens, as in Eq.~\eqref{eq:replay_first:token_replay_objective}. The analysis also allows replay to overlap $\mathcal B$. Occurrences are counted separately and representations are recomputed. Hold the selected multiset $\mathcal R$ fixed throughout the comparison. Write $g_{\mathcal R}=\nabla L_{\mathcal R}(\omega_0)$.

\begin{replayanalysisprop}[Conditional local effect of historical replay]
	\label{prop:replay_analysis:correction}
	At the fixed state above, $Pg_{\mathcal B}=0$. The same holds for a gradient obtained by repeating current examples or reweighting them with fixed weights. Define $d=-Pg_{\mathcal R}$. If $Pg_{\mathcal R}\ne0$, then
	\begin{equation}
		g_{\mathcal B}^{\top}d=0,
		\qquad g_{\mathcal R}^{\top}d=-\|Pg_{\mathcal R}\|_2^2<0.
		\label{eq:replay_analysis:direction}
	\end{equation}
	For the gradient-descent updates $\omega_{\mathrm C}=\omega_0-\eta g_{\mathcal B}$ and $\omega_{\mathrm E}=\omega_0-\eta(g_{\mathcal B}+\beta g_{\mathcal R})$, we have
	\begin{equation}
		P(\omega_{\mathrm E}-\omega_0)=\eta\beta d,
		\qquad P(\omega_{\mathrm C}-\omega_0)=0.
		\label{eq:replay_analysis:projected_update}
	\end{equation}
	Their replay losses satisfy
	\begin{equation}
		L_{\mathcal R}(\omega_{\mathrm E})-L_{\mathcal R}(\omega_{\mathrm C})
		=-\eta\beta\left(\|Pg_{\mathcal R}\|_2^2+\|(I-P)g_{\mathcal R}\|_2^2\right)+O(\eta^2)
		\label{eq:replay_analysis:benefit}
	\end{equation}
	as $\eta\to0$. If $g_{\mathcal R}\ne0$, then $L_{\mathcal R}(\omega_{\mathrm E})<L_{\mathcal R}(\omega_{\mathrm C})$ for all sufficiently small $\eta>0$. This loss comparison does not require $Pg_{\mathcal R}\ne0$.
\end{replayanalysisprop}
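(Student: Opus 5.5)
The plan is to prove $Pg_{\mathcal B}=0$ by showing $g_{\mathcal B}\in\mathcal K^\perp$. Every other claim then follows from linear algebra and a first-order Taylor expansion.

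Since $P$ is the orthogonal projector onto $\mathcal K$ and is symmetric, it suffices to check $v^\top g_{\mathcal B}=0$ for every $v\in\mathcal K$. Take $v\in\mathcal K$ and expand Eq.~\eqref{eq:replay_analysis:gradient}. For each token term we have $J_qv=c_q\mathbf 1$ for some scalar $c_q$. Hence
\[
v^\top J_q^\top(\pi_q-\mathbf e_{y_q})=c_q\,\mathbf 1^\top(\pi_q-\mathbf e_{y_q})=c_q(1-1)=0,
\]
because $\pi_q$ is a softmax and sums to one. For each alignment term, $v^\top A_i^\top e_i=(A_iv)^\top e_i=0$. Therefore $g_{\mathcal B}\perp\mathcal K$, and so $Pg_{\mathcal B}=0$. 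Repeating current examples or reweighting them with fixed weights only changes the scalar coefficients on these same vectors $J_q^\top(\pi_q-\mathbf e_{y_q})$ and $A_i^\top e_i$. This holds because repeated occurrences use identical views and forward realizations, so they generate the same $J_q$, $\pi_q$, $A_i$, $e_i$, and the set $\mathcal K$ is unchanged. The same orthogonality argument therefore applies verbatim.

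For Eq.~\eqref{eq:replay_analysis:direction}, set $d=-Pg_{\mathcal R}$. Then $g_{\mathcal B}^\top d=-(Pg_{\mathcal B})^\top g_{\mathcal R}=0$ by symmetry of $P$. Also
\[
g_{\mathcal R}^\top d=-g_{\mathcal R}^\top P g_{\mathcal R}=-\|Pg_{\mathcal R}\|_2^2,
\]
using $P=P^\top=P^2$, and this is strictly negative when $Pg_{\mathcal R}\neq0$. For Eq.~\eqref{eq:replay_analysis:projected_update}, apply $P$ to the two update vectors $-\eta g_{\mathcal B}$ and $-\eta(g_{\mathcal B}+\beta g_{\mathcal R})$, and use $Pg_{\mathcal B}=0$.

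For Eq.~\eqref{eq:replay_analysis:benefit}, the standing assumptions give that $L_{\mathcal R}$ is $C^2$ near $\omega_0$: the logits are finite, the maps are twice continuously differentiable, and the representation norms stay above the normalization floor, so the cosine normalization is smooth. Taylor's theorem then gives
\[
L_{\mathcal R}(\omega_0+h)=L_{\mathcal R}(\omega_0)+g_{\mathcal R}^\top h+O(\|h\|^2).
\]
Applying this at $h=-\eta g_{\mathcal B}$ and at $h=-\eta(g_{\mathcal B}+\beta g_{\mathcal R})$ and subtracting, the $g_{\mathcal R}^\top g_{\mathcal B}$ terms cancel. What remains is $-\eta\beta\|g_{\mathcal R}\|_2^2+O(\eta^2)$. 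The orthogonal decomposition $\|g_{\mathcal R}\|^2=\|Pg_{\mathcal R}\|^2+\|(I-P)g_{\mathcal R}\|^2$ gives the stated form. If $g_{\mathcal R}\neq0$, the linear term dominates for small $\eta$, which gives the strict inequality without needing $Pg_{\mathcal R}\neq0$.

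The main obstacle is justifying the uniform $O(\eta^2)$ remainder. This needs the Hessian of $L_{\mathcal R}$ to be bounded on a neighborhood of $\omega_0$, which follows from $C^2$ smoothness on a compact ball and the fixed multiset $\mathcal R$. It also needs care with the cross-entropy softmax identity and with overlapping multiset occurrences. I would handle these by treating each occurrence as a separate summand with fixed forward realization.
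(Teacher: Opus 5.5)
Your proposal is correct and follows the paper's proof essentially step for step. You show that each token term (via $\mathbf 1^\top(\pi_q-\mathbf e_{y_q})=0$) and each alignment term is orthogonal to $\mathcal K$, use $P=P^\top=P^2$ for the directional and projected-update identities, and then apply a first-order Taylor expansion in which the shared $-\eta g_{\mathcal R}^\top g_{\mathcal B}$ term cancels, with the $O(\eta^2)$ remainder controlled by the $C^2$ assumptions.
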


\begin{proof}\renewcommand{\qedsymbol}{}
	Let $v\in\mathcal K$. For each supervised token, write $J_qv=c_q\mathbf1$. Then
	\[
	(J_qv)^\top(\pi_q-\mathbf e_{y_q})=c_q(1-1)=0,
	\qquad (A_iv)^\top e_i=0.
	\]
	Thus every token and alignment contribution is orthogonal to $\mathcal K$. Equation~\eqref{eq:replay_analysis:gradient} gives $Pg_{\mathcal B}=0$. Repeating these contributions or changing their fixed weights preserves this property.
	
	Since $P=P^\top=P^2$, we have $g_{\mathcal B}^\top d=0$ and $g_{\mathcal R}^\top d=-g_{\mathcal R}^\top Pg_{\mathcal R}=-\|Pg_{\mathcal R}\|_2^2$. This proves Eq.~\eqref{eq:replay_analysis:direction} when $Pg_{\mathcal R}\ne0$. Applying $P$ to the two updates gives Eq.~\eqref{eq:replay_analysis:projected_update}.
	
	Taylor expansion at $\omega_0$ gives
	\[
	\begin{aligned}
		L_{\mathcal R}(\omega_{\mathrm C})
		&=L_{\mathcal R}(\omega_0)-\eta g_{\mathcal R}^\top g_{\mathcal B}+O(\eta^2),\\
		L_{\mathcal R}(\omega_{\mathrm E})
		&=L_{\mathcal R}(\omega_0)-\eta g_{\mathcal R}^\top g_{\mathcal B}
		-\eta\beta\|g_{\mathcal R}\|_2^2+O(\eta^2).
	\end{aligned}
	\]
	Subtracting cancels the shared term $-\eta g_{\mathcal R}^\top g_{\mathcal B}$. The orthogonal decomposition $g_{\mathcal R}=Pg_{\mathcal R}+(I-P)g_{\mathcal R}$ then gives Eq.~\eqref{eq:replay_analysis:benefit}. The smoothness assumptions bound the absolute remainder by $C\eta^2$ for some fixed $C>0$ and all sufficiently small $\eta$. If $g_{\mathcal R}\ne0$, choosing $\eta$ small enough that $C\eta<\beta\|g_{\mathcal R}\|_2^2$ makes the difference strictly negative.
\end{proof}

The term $\|Pg_{\mathcal R}\|_2^2$ is the contribution of the additional directions to the first-order loss difference. It is positive only when $Pg_{\mathcal R}\ne0$. The projector is an analytical device; ER-JEPA uses the full replay gradient. The loss comparison also applies to ordinary replay or any added smooth loss with a nonzero gradient. This analysis characterizes a conditional local effect on the replay loss; its implications for predictive performance are evaluated empirically.

\section{Additional Empirical Analyses}
\label{app:additional_empirical_analyses}

\begin{figure*}[t]
	\centering
	\includegraphics[width=0.85\linewidth]{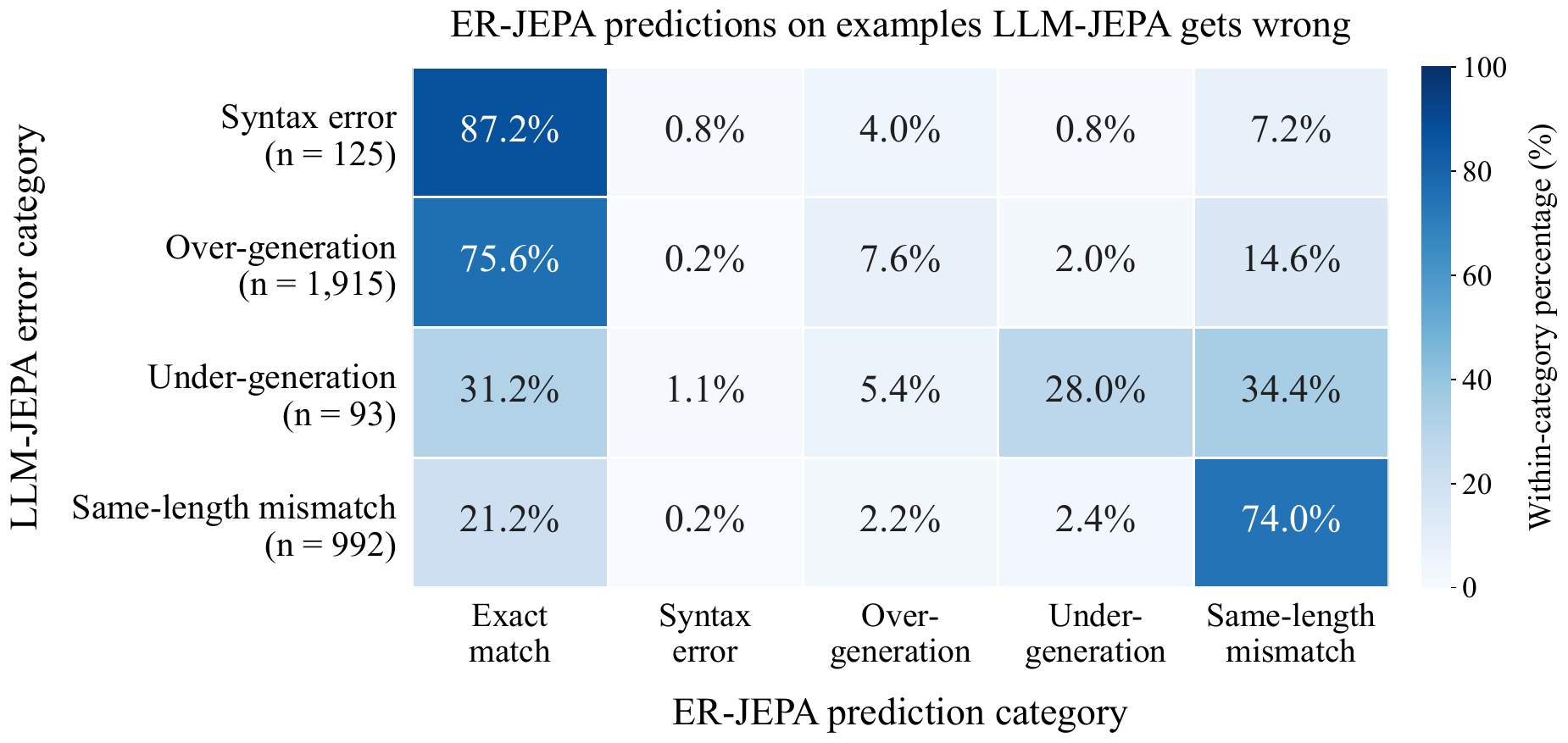}
	\caption{ER-JEPA predictions on examples that LLM-JEPA predicts incorrectly. Rows indicate LLM-JEPA error categories, and columns indicate ER-JEPA prediction categories for the same input and seed. The row total $n$ counts LLM-JEPA errors pooled across five seeds. Each cell shows its count divided by $n$, expressed as a percentage to one decimal place. Results use Llama-3.2-1B on 2,000 SYNTH test examples per seed at a matched training compute budget of 240.309 PFLOPs.}
	\label{fig:llm_jepa_error_outcomes}
\end{figure*}
\subsection{Regex Structural Subsets}
\label{app:regex_structural_subsets}

\begin{table}[t]
	\centering
	\caption{\textbf{Definitions of the six regex structural subsets.} Each subset contains examples whose target regular expressions include the specified feature.}
	\label{tab:regex_structure_definitions}
	\setlength{\tabcolsep}{4pt}
	\renewcommand{\arraystretch}{1.15}
	\begin{tabularx}{\linewidth}{@{}>{\raggedright\arraybackslash}p{0.18\linewidth}>{\raggedright\arraybackslash}p{0.34\linewidth}>{\raggedright\arraybackslash}X@{}}
		\toprule
		\textbf{Subset} & \textbf{Classification rule} & \textbf{Meaning} \\
		\midrule
		Complement
		& Contains the operator \texttt{\textasciitilde}.
		& Excludes strings matched by a subexpression. \\
		\addlinespace
		Intersection
		& Contains the operator \texttt{\&}.
		& Requires multiple patterns to hold simultaneously. \\
		\addlinespace
		Alternation
		& Contains the operator \texttt{|}.
		& Allows a choice between alternative patterns. \\
		\addlinespace
		Quantifier
		& Contains \texttt{*}, \texttt{+}, \texttt{?}, or a brace quantifier such as \texttt{\{m,n\}}.
		& Specifies repetition counts or optionality. \\
		\addlinespace
		Boundary /\newline anchor
		& Contains \texttt{\textbackslash b}, \texttt{\textbackslash B}, \texttt{\textasciicircum}, or \texttt{\$}.
		& Constrains matching positions relative to word boundaries or string endpoints. \\
		\addlinespace
		Character class
		& Contains a bracketed character class, such as \texttt{[a-z]}.
		& Specifies a set of allowed characters. \\
		\bottomrule
	\end{tabularx}
\end{table}

Classification follows the regex grammar. Escaped literal symbols and symbols inside character classes are not counted as separate operators. Each example is assigned to every applicable subset, so the subsets may overlap.

\subsection{Error Correction Across Seeds}
\label{app:baseline_error_correction}

Table~\ref{tab:baseline_error_correction_by_seed} reports the individual results summarized in Table~\ref{tab:baseline_error_correction}. For each seed, we identify Baseline errors among the same 2,000 SYNTH test inputs and evaluate both methods on those inputs. Syntax errors are excluded from the three length-based categories. A prediction counts as corrected only when it exactly matches the target. Each correction rate uses the Baseline error count in its row as the denominator. ER-JEPA uses content replay.

\begin{table}[t]
	\centering
	\caption{\textbf{Correction counts and rates for each seed on the same Baseline errors.} Both methods use the same inputs and seed within each row. Corrected subsets may overlap between methods. The counts and rates in Table~\ref{tab:baseline_error_correction} are summarized across these five seeds.}
	\label{tab:baseline_error_correction_by_seed}
	\setlength{\tabcolsep}{3pt}
	\renewcommand{\arraystretch}{1.1}
	\begin{tabular*}{\linewidth}{@{\extracolsep{\fill}}cccccc@{}}
		\toprule
		Seed & \shortstack{Baseline\\errors} & \shortstack{LLM-JEPA\\corrected} & \shortstack{Correction\\rate (\%)} & \shortstack{ER-JEPA\\corrected} & \shortstack{Correction\\rate (\%)} \\
		\midrule
		\multicolumn{6}{l}{\textbf{(a) Over-generation}} \\
		4 & 901 & 448 & 49.72 & 807 & 89.57 \\
		23 & 697 & 435 & 62.41 & 583 & 83.64 \\
		37 & 705 & 375 & 53.19 & 608 & 86.24 \\
		82 & 591 & 344 & 58.21 & 455 & 76.99 \\
		84 & 698 & 292 & 41.83 & 601 & 86.10 \\
		\addlinespace
		\multicolumn{6}{l}{\textbf{(b) Under-generation}} \\
		4 & 1 & 0 & 0.00 & 1 & 100.00 \\
		23 & 5 & 1 & 20.00 & 2 & 40.00 \\
		37 & 3 & 0 & 0.00 & 1 & 33.33 \\
		82 & 2 & 1 & 50.00 & 2 & 100.00 \\
		84 & 2 & 1 & 50.00 & 0 & 0.00 \\
		\addlinespace
		\multicolumn{6}{l}{\textbf{(c) Same-length mismatch}} \\
		4 & 161 & 29 & 18.01 & 36 & 22.36 \\
		23 & 156 & 31 & 19.87 & 38 & 24.36 \\
		37 & 173 & 41 & 23.70 & 46 & 26.59 \\
		82 & 184 & 41 & 22.28 & 53 & 28.80 \\
		84 & 168 & 32 & 19.05 & 39 & 23.21 \\
		\bottomrule
	\end{tabular*}
\end{table}z

\subsection{Examples Selected by a Fixed Rule}
\label{app:rule_selected_examples}

\begin{figure}[htbp]
	\centering
	\includegraphics[width=\linewidth]{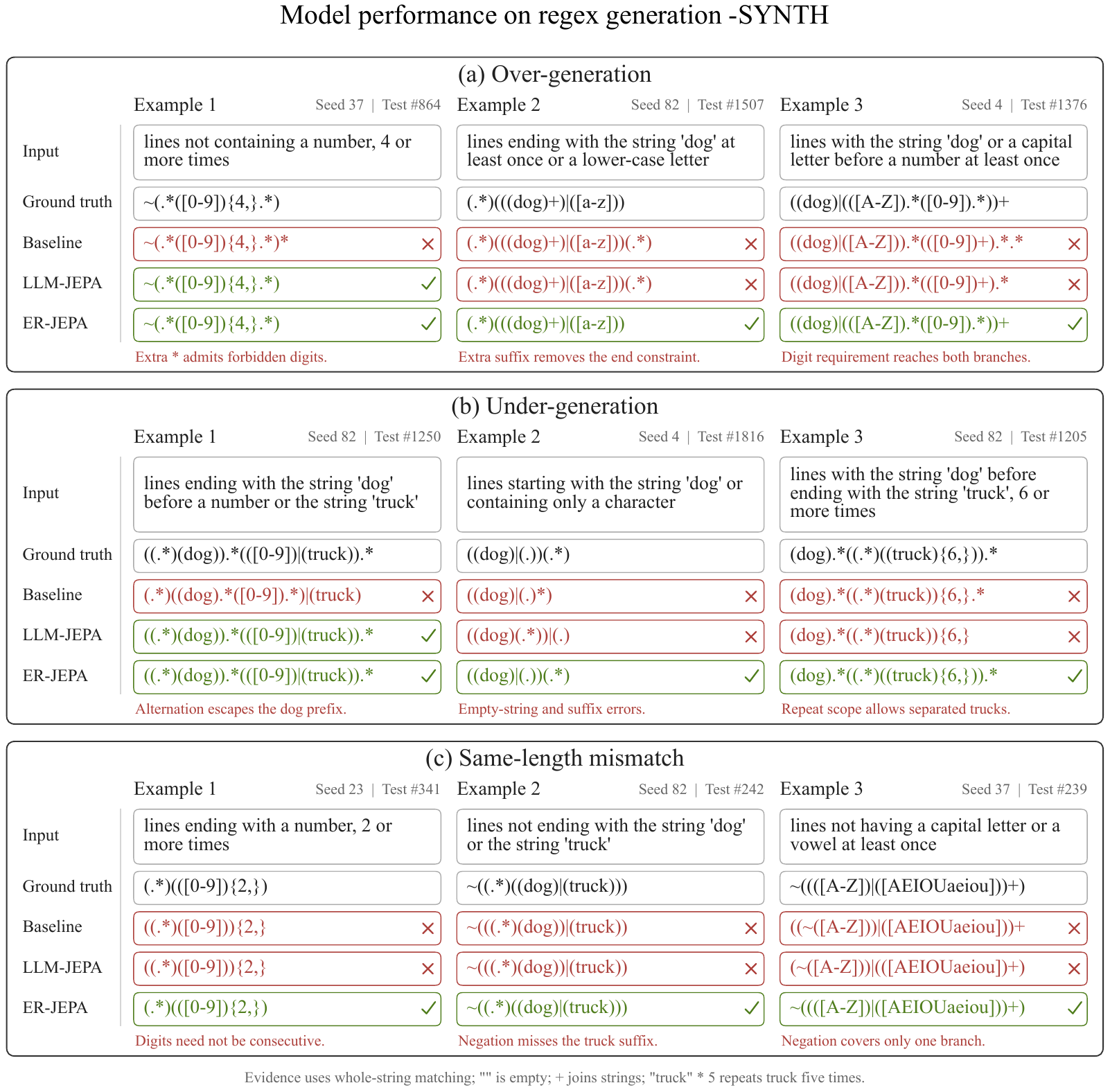}
	\caption{Example predictions from Baseline, LLM-JEPA, and ER-JEPA on the SYNTH dataset.}
	\label{fig:appendix_rule_selected_examples}
\end{figure}

\begin{figure}[htbp]
	\centering
	\includegraphics[width=\linewidth]{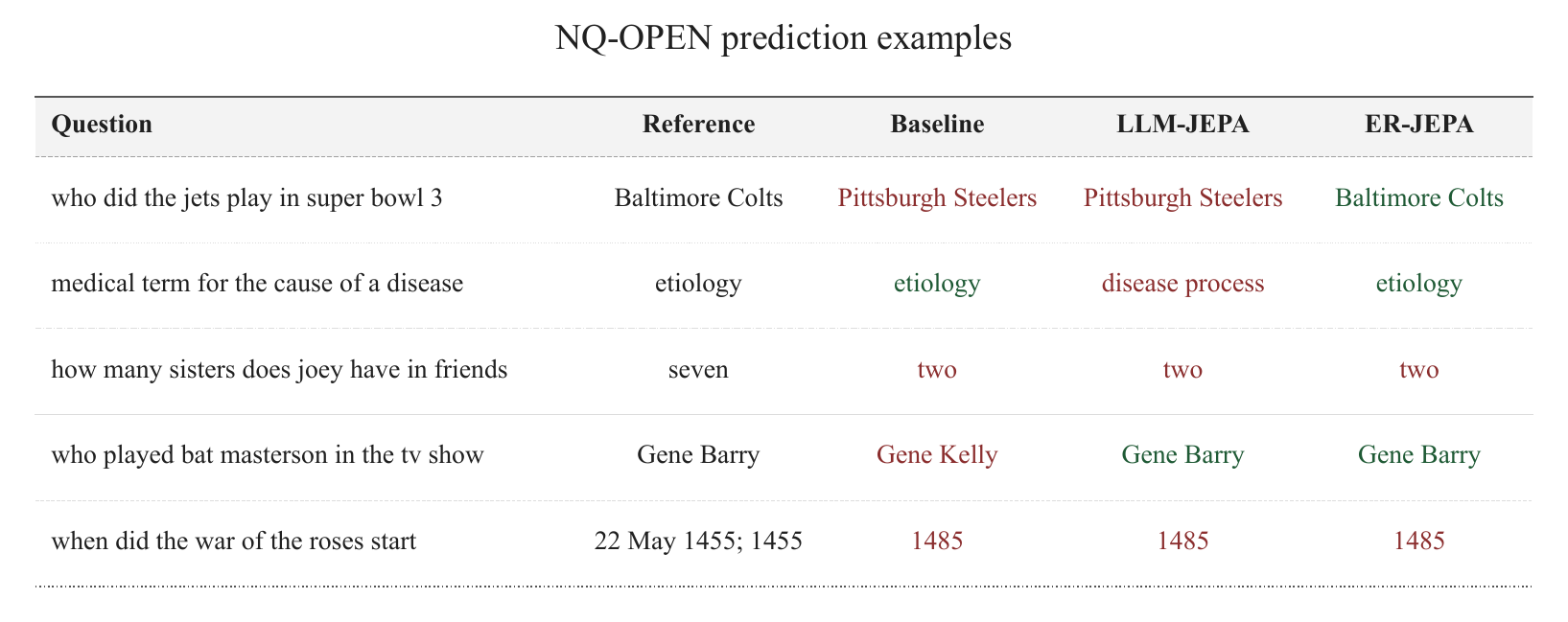}
	\caption{Example predictions from Baseline, LLM-JEPA, and ER-JEPA on the NQ-OPEN dataset. Correct and incorrect answers are shown in dark green and dark red, respectively. For each prediction, only the first semicolon-separated answer is displayed.}
	\label{fig:appendix_nq_open_five_examples}
\end{figure}

Figure~\ref{fig:appendix_rule_selected_examples} illustrates Baseline errors corrected by ER-JEPA, grouped by the Baseline error taxonomy. The selection contains exactly two LLM-JEPA-correct cases and seven LLM-JEPA errors; all three methods are compared on the same input and seed for each example. The full evaluation, using 2,000 test inputs for each of five seeds, gives mean accuracies of $53.63\%$, $68.75\%$, and $83.65\%$ for Baseline, LLM-JEPA, and ER-JEPA, respectively. Figure~\ref{fig:llm_jepa_error_outcomes} reports ER-JEPA predictions on inputs that LLM-JEPA predicts incorrectly.

\subsection{Individual Seed Accuracy at Matched PFLOPs}
\label{app:single_seed_curves}

\begin{figure}[htbp]
	\centering
	\includegraphics[width=0.48\linewidth]{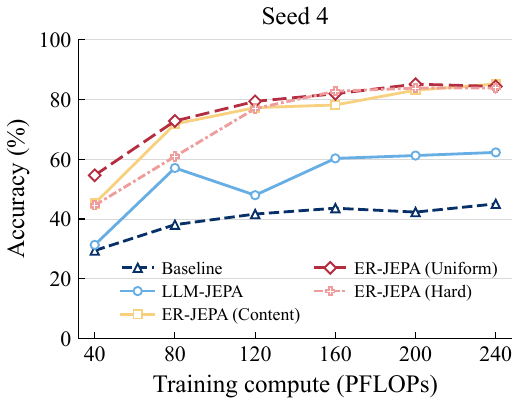}
	\hfill
	\includegraphics[width=0.48\linewidth]{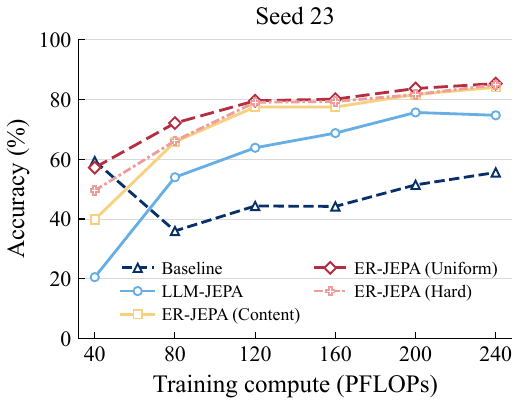}
	
	\includegraphics[width=0.48\linewidth]{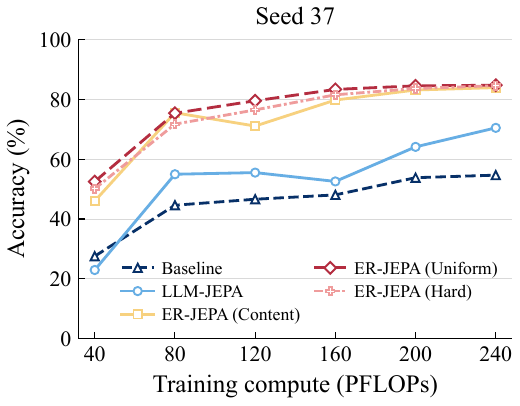}
	\hfill
	\includegraphics[width=0.48\linewidth]{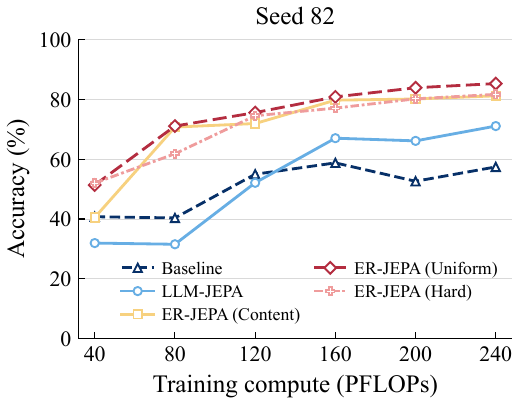}
	
	\caption{Raw accuracy at six shared target PFLOPs for seeds 4, 23, 37, and 82. Each panel shows one seed. Seed 84 is shown in Figure~\ref{fig:accuracy_vs_compute}(b). The panels do not show means or error bars.}
	\label{fig:appendix_single_seed_curves}
\end{figure}

Figure~\ref{fig:appendix_single_seed_curves} complements the seed 84 results in Figure~\ref{fig:accuracy_vs_compute}(b). Across all five seeds, Baseline has a local accuracy decline in four seeds, and LLM-JEPA has a decline in all five. Such declines occur less often with replay. These results suggest that replay mitigates overfitting across individual runs.


\section{ER-JEPA Implementation Details}
\label{sec:appendix:replay_details}

\subsection{Episodic Memory and Retrieval}
\label{sec:method:replay_first:memory}

ER-JEPA maintains a fixed-capacity episodic memory $\mathcal{M}_t$. Each active slot stores a source--target token pair, a sparse address, and memory-management metadata. We denote these entries by
\begin{equation}
	m_n=\big(s_n,\tau_n^u,\tau_n^a,\sigma_n,r_n\big),
	\qquad n\in\mathcal{A}_t,
	\label{eq:replay_first:memory_entry}
\end{equation}
where $\mathcal{A}_t$ is the set of active slots, $s_n$ is the address stored at insertion, $\sigma_n$ is the running JEPA error, and $r_n$ is the replay count. Examples in the current mini-batch are excluded from replay. \textbf{Writing, replacement}, and score \textbf{updates} follow Appendix~\ref{sec:appendix:replay_memory}.

Content-based retrieval uses the current source representation $z_j^u=\mathrm{Enc}_{\theta}(x_j^u)$ as a cue. Let $\Phi$ be the frozen random projection with sparse Top-$K$ selection defined in Appendix~\ref{sec:appendix:replay_content}. We normalize the cue and stored addresses as
\begin{equation}
	\hat{s}_j^{\rm cue}=\frac{\Phi(z_j^u)}{\|\Phi(z_j^u)\|_2},
	\qquad
	\hat{s}_n=\frac{s_n}{\|s_n\|_2}.
	\label{eq:replay_first:normalized_addresses}
\end{equation}
For each cue, we retrieve up to $\kappa$ slots with the highest cosine similarities,
\begin{equation}
	\mathcal{K}_{t,j}
	=\operatorname{Top}_{\min(\kappa,N_t)}
	\left(\left\{\big(n,\langle\hat{s}_j^{\rm cue},\hat{s}_n\rangle\big)
	\mid n\in\mathcal{A}_t\right\}\right),
	\qquad N_t=|\mathcal{A}_t|.
	\label{eq:replay_first:retrieval_selection}
\end{equation}
Here $\operatorname{Top}$ returns slot indices in descending similarity order. We concatenate the lists in batch order, keep the first occurrence of each slot, and retain at most $R$ entries, where $R$ is the replay budget. If fewer than $R$ distinct slots are retrieved, we replay only those slots. Let $\mathcal{S}_t=(n_{t,1},\ldots,n_{t,R_t})$ denote the selected slot list, where $R_t=|\mathcal{S}_t|\leq R$. The token pair for replay entry $i$ is
\begin{equation}
	(\widetilde{\tau}_i^u,\widetilde{\tau}_i^a)
	=(\tau_{n_{t,i}}^u,\tau_{n_{t,i}}^a),
	\qquad i=1,\ldots,R_t.
	\label{eq:replay_first:token_batch}
\end{equation}
The selection rules for all three replay strategies are given in Appendix~\ref{sec:appendix:replay_policies}.

\subsection{Shared Episodic Memory}
\label{sec:appendix:replay_memory}

Each new slot is initialized with
\begin{equation}
	\sigma_n = d(p_n, z_n^a), \qquad r_n = 0,
	\label{eq:replay_first:significance_init}
\end{equation}
where $p_n$ is the predicted target representation and $z_n^a$ is the target readout. When the store is full, the slot with the lowest replay-adjusted significance is evicted,
\begin{equation}
	n^\star = \arg\min_{n\in\mathcal{A}_t} \frac{\sigma_n}{1+r_n}.
	\label{eq:replay_first:eviction}
\end{equation}
After a replay forward pass, the significance of each selected slot is updated as
\begin{equation}
	\sigma_n \leftarrow (1-\eta)\sigma_n + \eta\, d(p_n^{\rm rep}, z_n^{a,{\rm rep}}),
	\qquad
	r_n \leftarrow r_n + 1,
	\qquad \eta\in(0,1),
	\label{eq:replay_first:significance}
\end{equation}
where $\eta$ is the update rate and $p_n^{\rm rep}, z_n^{a,{\rm rep}}$ are recomputed during replay. Unselected slots retain their scores. These rules are shared by all three selection policies.

\subsection{Replay Selection Policies}
\label{sec:appendix:replay_policies}

The content-based, uniform, and hard replay strategies differ only in how replay slots are selected. All use the memory rules in Appendix~\ref{sec:appendix:replay_memory} and the replay loss in Eq.~\eqref{eq:replay_first:token_replay_objective}. Let $\mathcal{A}_t$ be the active slots, $N_t=|\mathcal{A}_t|$, and $R$ the replay budget. A policy $\pi\in\{\mathrm{content},\mathrm{uniform},\mathrm{hard}\}$ returns an ordered slot list $\mathcal{S}_t^\pi$, and $\mathcal{S}_t=\mathcal{S}_t^\pi$. Repeated slots contribute once per occurrence. All policies return an empty selection when $N_t=0$.

\subsubsection{Content Replay}
\label{sec:appendix:replay_content}
\label{sec:method:replay_first:content}

Content replay retrieves stored examples using current source representations $z^u$. Representations are mapped to sparse addresses with a frozen random projection and Top-$K$ selection,
\begin{equation}
	\begin{aligned}
		W_{\rm PS}&\in\mathbb{R}^{S\times H},
		\qquad [W_{\rm PS}]_{\ell k}\sim\mathcal{N}(0,1/H),\\
		[\Phi(z)]_{\ell}
		&=\mathrm{stop\text{-}grad}\!\left(
		\mathbb{I}[\ell\in\mathcal{I}_K(W_{\rm PS}z)]
		(W_{\rm PS}z)_{\ell}\right).
	\end{aligned}
	\label{eq:replay_first:sparse_address}
\end{equation}
$H$ and $S>H$ are the representation and address dimensions, and $\mathcal{I}_K(h)$ contains the indices of the $K$ largest-magnitude entries of $h$, retaining their sign and magnitude. The projection is fixed throughout training.

For current example $j$, the cue is $\hat{s}_j^{\rm cue}=\Phi(z_j^u)/\|\Phi(z_j^u)\|_2$. Let $s_n$ be the address stored at insertion and $\hat{s}_n=s_n/\|s_n\|_2$. The $\min(\kappa,N_t)$ slots with the highest cosine similarity are retrieved,
\begin{equation}
	\mathcal{K}_{t,j}
	=\operatorname{Top}_{\min(\kappa,N_t)}
	\left(\left\{(n,\langle\hat{s}_j^{\rm cue},\hat{s}_n\rangle)
	\mid n\in\mathcal{A}_t\right\}\right),
	\label{eq:replay_first:content_topk}
\end{equation}
returned in descending score order. Lists from all examples in the batch are concatenated in batch order, deduplicated, and truncated to $R$ entries,
\begin{equation}
	\mathcal{C}_t
	=\operatorname{Prefix}_{R}\!\left(
	\operatorname{Unique}\!\left(
	\mathcal{K}_{t,1}\Vert\cdots\Vert\mathcal{K}_{t,|\mathcal{B}_t|}
	\right)\right),
	\qquad
	\mathcal{S}_t^{\rm content}=\mathcal{C}_t,
	\label{eq:replay_first:content_batch}
\end{equation}
where $\operatorname{Unique}$ keeps the first occurrence of each slot and $\operatorname{Prefix}_R$ keeps at most $R$ entries. The default policy uses the resulting list without repetition. For controls with a fixed replay count, we instead use $\mathcal{S}_t^{\rm content}=\operatorname{Fill}_R(\mathcal{C}_t)$, where $\operatorname{Fill}_R$ repeats a shorter nonempty list until it reaches $R$ entries; an empty list remains empty. Cosine similarity is used only for retrieval and contributes no training loss.

\subsubsection{Uniform Replay}
\label{sec:appendix:replay_uniform}
\label{sec:method:replay_first:uniform}

Uniform replay samples stored examples with equal probability, without using cue similarity or significance. When $R\leq N_t$, $R$ slots are sampled without replacement: for any subset $\mathcal{U}\subseteq\mathcal{A}_t$ of size $R$,
\begin{equation}
	\Pr\!\left(\operatorname{set}(\mathcal{S}_t^{\rm uniform})=\mathcal{U}\mid\mathcal{M}_t\right)
	=\binom{N_t}{R}^{-1},
	\qquad
	\Pr(n\in\mathcal{S}_t^{\rm uniform}\mid\mathcal{M}_t)=\frac{R}{N_t}.
	\label{eq:replay_first:uniform}
\end{equation}
When $0<N_t<R$, $R$ slots are drawn independently with replacement, each with probability $1/N_t$.

Uniform replay estimates the mean JEPA error over active memory. Let $d_n(t)$ be the error of stored example $n$ under the current model. For a nonempty memory and fixed parameters,
\begin{equation}
	\mathbb{E}\!\left[
	\frac{1}{R}\sum_{n\in\mathcal{S}_t^{\rm uniform}}d_n(t)
	\;\middle|\;\mathcal{M}_t,\theta
	\right]
	=\frac{1}{N_t}\sum_{n\in\mathcal{A}_t}d_n(t).
	\label{eq:replay_first:uniform_expectation}
\end{equation}

\subsubsection{Hard Replay}
\label{sec:appendix:replay_hard}
\label{sec:method:replay_first:hard}

Hard replay selects the $\min(R,N_t)$ slots with the largest significance $\sigma_n$ (Appendix~\ref{sec:appendix:replay_memory}),
\begin{equation}
	\mathcal{H}_t
	=\operatorname{Top}_{\min(R,N_t)}
	\left(\left\{(n,\sigma_n)\mid n\in\mathcal{A}_t\right\}\right),
	\qquad \mathcal{S}_t^{\rm hard}=\operatorname{Fill}_{R}(\mathcal{H}_t),
	\label{eq:replay_first:hard}
\end{equation}
returned in descending score order, with $\operatorname{Fill}_R$ as in Appendix~\ref{sec:appendix:replay_content}. A large JEPA error indicates poor representation alignment, not necessarily an incorrect generated answer.


\section{Ablation Studies}
\label{app:ablation}

\begin{table}
	\centering
	\caption{Some regular expressions generated by Llama-3.2-1B-Instruct after fine-tuning with $\mathcal{L}_{\rm LLM}$, $\mathcal{L}_{\rm LLM-JEPA}$, and $\mathcal{L}_{\rm ER-JEPA}$ losses. Color code: \colorbox{lightblue}{wrong}, \colorbox{lightpink}{extra}, \colorbox{lightorange}{missing}.}
	\label{tab:more_examples_three_losses}
	\setlength{\tabcolsep}{5pt}
	\renewcommand{\arraystretch}{1.1}
	\begin{tabular}{@{}ll@{}}
		\toprule
		\textbf{Model / target} & \textbf{Regular expression} \\
		\midrule
		\multicolumn{2}{@{}l@{}}{\textbf{Input:} lines not having the string ``dog'' followed by a number, 3 or more times} \\
		\addlinespace
		Ground truth & \textasciitilde((dog.*[0-9].*)\{3,\}) \\
		$\mathcal{L}_{\rm LLM}$ & \textasciitilde((dog.*[0-9].*)\{3,\}) \\
		$\mathcal{L}_{\rm LLM-JEPA}$ & \textasciitilde((dog.*[0-9].*)\{3,\}) \\
		$\mathcal{L}_{\rm ER-JEPA}$ & \textasciitilde((dog.*[0-9].*)\{3,\}) \\
		\midrule
		\multicolumn{2}{@{}l@{}}{\textbf{Input:} lines containing ending with a vowel, zero or more times} \\
		\addlinespace
		Ground truth & .*(.*)(([AEIOUaeiou])*).* \\
		$\mathcal{L}_{\rm LLM}$ & .*(.*)(([AEIOUaeiou])*).*\extra{.*.*} \\
		$\mathcal{L}_{\rm LLM-JEPA}$ & .*(.*)(([AEIOUaeiou])*).*\extra{.*} \\
		$\mathcal{L}_{\rm ER-JEPA}$ & .*(.*)(([AEIOUaeiou])*).* \\
		\midrule
		\multicolumn{2}{@{}l@{}}{\textbf{Input:} lines with a number or a character before a vowel} \\
		\addlinespace
		Ground truth & (([0-9])\textbar(.)).*([AEIOUaeiou]).* \\
		$\mathcal{L}_{\rm LLM}$ & (([0-9])\textbar(.)).*([AEIOUaeiou]).*\extra{.*} \\
		$\mathcal{L}_{\rm LLM-JEPA}$ & (([0-9])\textbar(.)).*([AEIOUaeiou]).* \\
		$\mathcal{L}_{\rm ER-JEPA}$ & (([0-9])\textbar(.)).*([AEIOUaeiou]).* \\
		\midrule
		\multicolumn{2}{@{}l@{}}{\textbf{Input:} lines ending with containing the string ``dog'', 7 or more times} \\
		\addlinespace
		Ground truth & ((.*)(.*dog.*))\{7,\} \\
		$\mathcal{L}_{\rm LLM}$ & ((.*)(.*\extra{(}dog.*))\{7,\}\extra{.*)*} \\
		$\mathcal{L}_{\rm LLM-JEPA}$ & ((.*)\extra{(}(.*dog.*))\{7,\}\wrong{)} \\
		$\mathcal{L}_{\rm ER-JEPA}$ & \miss{ }(.*)\extra{(}(.*dog.*){7,})\wrong{)} \\
		\bottomrule
	\end{tabular}
\end{table}
\subsection{Sparse and Dense Memory Addressing}
\label{sec:experiments:dg_design}

We compare content replay using sparse addresses and dense source representations. Sparse addresses are formed by a frozen random projection and Top-$K$ selection. Both variants use cosine similarity, with all other settings fixed. Table~\ref{tab:dg_sparse_retrieval} reports mean accuracies of $86.37\%$ for sparse retrieval and $86.21\%$ for dense retrieval. The two strategies achieve similar mean accuracy, with a difference of $0.16$ percentage points.

\begin{table}[htbp]
	\centering
	\caption{Sparse and dense memory addressing for content replay. All other training settings are fixed. Results report mean accuracy, standard deviation, minimum, and maximum over five seeds.}
	\label{tab:dg_sparse_retrieval}
	\begin{tabular}{lccc}
		\toprule
		\textbf{Method}
		& \textbf{Accuracy (\%) $\uparrow$}
		& \textbf{Min}
		& \textbf{Max}\\
		\midrule
		ER-JEPA (Sparse Retrieval)
		& $\mathbf{86.37 \pm 0.35}$
		& $85.75$
		& $86.60$ \\
		Dense Retrieval
		& $86.21 \pm 0.37$
		& $85.85$
		& $\mathbf{86.70}$ \\
		\bottomrule
	\end{tabular}
\end{table}

\subsection{Hyperparameter Ablations}

\paragraph{Memory capacity.}
Table~\ref{tab:ablation_memory_capacity} reports accuracy after four epochs, and Figure~\ref{fig:appendix_memory_capacity_curves} shows the learning curves. Increasing memory capacity from $10$ to $10^4$ raises accuracy from $84.22\%$ to $84.98\%$.

\begin{table}[htbp]
	\centering
	\caption{Ablation on episodic memory capacity $\mathcal{M}$ for Meta-Llama-3.2-1B-Instruct on NL-RX-SYNTH. Values at epoch 4 are mean $\pm$ standard deviation over five seeds. The observed training costs are not compute matched.}
	\label{tab:ablation_memory_capacity}
	\setlength{\tabcolsep}{4pt}
	\begin{tabular}{@{}rcccc@{}}
		\toprule
		\shortstack{\textbf{Memory}\\\textbf{capacity $\mathcal{M}$}}
		& \shortstack{\textbf{Accuracy}\\\textbf{(\%) $\uparrow$}}
		& \textbf{PFLOPs}
		& \textbf{Time (min)}
		& \textbf{Tokens (M)} \\
		\midrule
		$10$     & $84.22 \pm 0.75$                  & $181.536 \pm 0.211$ & $10.48 \pm 0.07$ & $8.581 \pm 0.166$ \\
		$10^2$   & $84.32 \pm 0.65$                  & $219.605 \pm 0.533$ & $11.62 \pm 0.22$ & $9.427 \pm 0.175$ \\
		$10^3$   & $84.76 \pm 0.52$                  & $219.766 \pm 0.439$ & $16.74 \pm 1.33$ & $10.969 \pm 0.097$ \\
		$10^4$   & $\mathbf{84.98 \pm 0.35}$         & $219.667 \pm 0.393$ & $22.23 \pm 0.17$ & $11.252 \pm 0.010$ \\
		\bottomrule
	\end{tabular}
\end{table}

\paragraph{Objective hyperparameters.}
We also vary the JEPA loss weight $\lambda$ and predictor depth $k$,
following the LLM-JEPA design.
Figure~\ref{fig:ablation_lambda_k} reports the results.
\clearpage
\begingroup
\makeatletter
\setlength{\@fptop}{0pt}
\setlength{\@fpsep}{12pt}
\setlength{\@fpbot}{0pt plus 1fil}
\makeatother
\begin{figure}[p]
	\centering
	\includegraphics[width=0.7\linewidth]{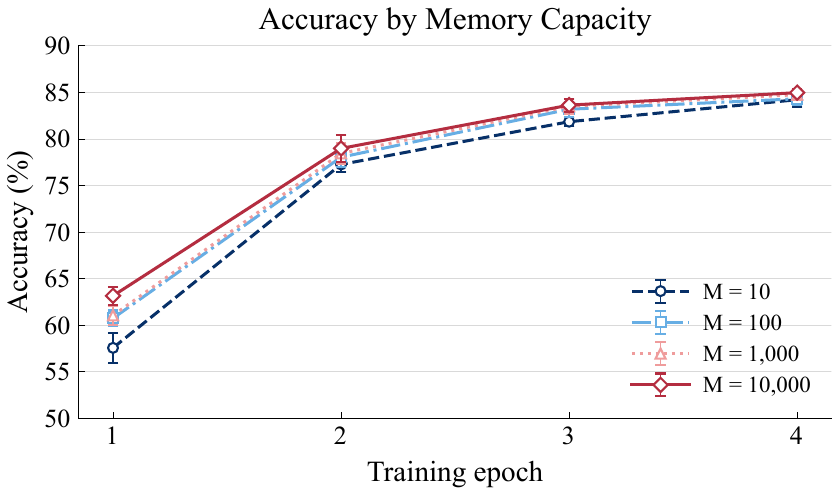}
	\caption{Accuracy across four training epochs for four episodic memory capacities. Points show means over five seeds. Error bars show one sample standard deviation. These runs are not compute matched.}
	\label{fig:appendix_memory_capacity_curves}
\end{figure}

\begin{figure}[p]
	\centering
	\includegraphics[width=0.6\columnwidth]{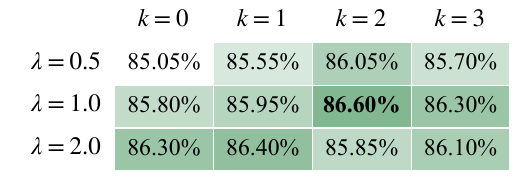}
	\caption{Ablation on the JEPA hyperparameters $\lambda$ and $k$.
		Each entry reports accuracy (\%), with the best result in bold.}
	\label{fig:ablation_lambda_k}
\end{figure}
\clearpage
\endgroup

\end{document}